\documentclass[sigconf]{acmart}
\usepackage{bm}
\usepackage{amsfonts}
\usepackage{graphicx}
\usepackage{textcomp}
\usepackage{xcolor}
\usepackage{url}
\usepackage{threeparttable} 
\usepackage[noend]{algpseudocode}
\usepackage{algorithm}
\algnewcommand\algorithmicinput{\textbf{Input:}}
\algnewcommand\Input{\item[\algorithmicinput]}
\algnewcommand\algorithmicoutput{\textbf{Output:}}
\algnewcommand\Output{\item[\algorithmicoutput]}
\usepackage{array}
\usepackage{booktabs}
\usepackage{amsthm}
\usepackage{multirow}
\usepackage{graphicx}
\usepackage{caption}
\usepackage{subcaption}
\usepackage{booktabs}
\usepackage{amsmath}
\usepackage{ulem}
\usepackage{pgf}
\usepackage{pgffor}
\usepackage{hyperref}
\usepackage{setspace}
\usepackage[skip=0pt]{caption} 
\usepackage{etoolbox}
\usepackage{enumitem}

\usepackage{float}
\usepackage{enumitem}
\usepackage{lipsum}   
\usepackage{pifont}   

\setlist{nosep, leftmargin=*}
\newtheorem{definition}{\textbf{Definition}}

\newtheorem{example}{\textbf{Example}}
\newtheorem{theorem}{\textbf{Theorem}}

\newcommand{\ModelName}{NS-Mem}
\newcommand{\GenerationName}{SK-Gen}

\makeatletter
\patchcmd{\proof}
  {\topsep}
  {3pt \@plus 0.1ex \@minus 0.1ex}
  {}{}
\patchcmd{\endproof}
  {\topsep}
  {3pt \@plus 0.1ex \@minus 0.1ex}
  {}{}
\makeatother

\setcopyright{none}
\begin{document}

    \title{Advancing Multimodal Agent Reasoning with Long-Term Neuro-Symbolic Memory}
    \author{Rongjie Jiang}
    \affiliation{
    \institution{University of New South Wales} \city{Sydney}
    \country{Australia}
    }
    \email{rongjie.jiang@student.unsw.edu.au}

    \author{Jianwei Wang\textsuperscript{*}}
    \affiliation{
    \institution{University of New South Wales} \city{Sydney}
    \country{Australia}
    }
    \email{jianwei.wang1@unsw.edu.au}

    \author{Gengda Zhao}
    \affiliation{
    \institution{University of New South Wales} \city{Sydney}
    \country{Australia}
    }
    \email{gengda.zhao@unsw.edu.au}

    \author{Chengyang Luo}
    \affiliation{
    \institution{Zhejiang University} \city{Hangzhou}
    \country{China}
    }
    \email{luocy1017@zju.edu.cn}

    \author{Kai Wang}
    \affiliation{
    \institution{Shanghai Jiao Tong University} \city{Shanghai}
    \country{China}
    }
    \email{w.kai@sjtu.edu.cn}

    \author{Wenjie Zhang}
    \affiliation{
    \institution{University of New South Wales} \city{Sydney}
    \country{Australia}
    }
    \email{wenjie.zhang@unsw.edu.au}

    \renewcommand{\shortauthors}{Jiang et al.}

\begin{abstract}
Recent advances in large language models have driven the emergence of intelligent agents operating in open-world, multimodal environments.
To support long-term reasoning, such agents are typically equipped with external memory systems. However, most existing multimodal agent memories rely primarily on neural representations and vector-based retrieval, which are well-suited for inductive, intuitive reasoning but fundamentally limited in supporting analytical, deductive reasoning critical for real-world decision making.
To address this limitation, we propose \textbf{\ModelName{}}, a long-term neuro-symbolic memory framework designed to advance multimodal agent reasoning by integrating neural memory with explicit symbolic structures and rules. Specifically, \ModelName{} is operated around three core components of a memory system: (1) a three-layer memory architecture that consists episodic layer, semantic layer and logic rule layer, (2) a memory construction and maintenance mechanism implemented by \GenerationName{} that automatically consolidates structured knowledge from accumulated multimodal experiences and incrementally updates both neural representations and symbolic rules, and (3) a hybrid memory retrieval mechanism that combines similarity-based search with deterministic symbolic query functions to support structured reasoning. Experiments on real-world multimodal reasoning benchmarks demonstrate that Neural-Symbolic Memory achieves an average 4.35\% improvement in overall reasoning accuracy over pure neural memory systems, with gains of up to 12.5\% on constrained reasoning queries, validating the effectiveness of \ModelName{}.
\end{abstract}
    \keywords{Neural-Symbolic AI, Memory Systems, Knowledge Representation, Open-World Agents, Structured Reasoning}
    \begin{CCSXML}
<ccs2012>
   <concept>
       <concept_id>10003752.10003809.10003635.10010038</concept_id>
       <concept_desc>Theory of computation~Dynamic graph algorithms</concept_desc>
       <concept_significance>500</concept_significance>
       </concept>
 </ccs2012>
\end{CCSXML}


    \maketitle
    \newcommand\codeavailabilityurl{https://anonymous.4open.science/r/NSTF-842F}
\label{code}
\ifdefempty{\codeavailabilityurl}{}{
\begingroup\small\noindent\raggedright\textbf{Code Availability:}\\
The source code of this paper has been made publicly available at \url{\codeavailabilityurl}.
\endgroup
}
    \section{Introduction}
    \label{sec.intro}

\begin{figure}[t]
\centering
\includegraphics[width=0.95\columnwidth]{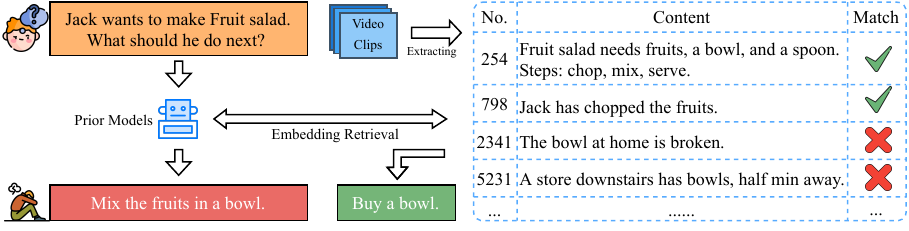}
\caption{An example of a vector-centric multimodal agent on a constrained query.}
\label{fig:example}
\end{figure}

The rapid advancement of Large Language Models (LLMs) has fundamentally redefined the landscape of intelligent agents, empowering them as sophisticated systems that perceive their environment, reason over observations, and execute actions to achieve complex goals~\cite{russell2010artificial, wooldridge1995intelligent}. In real-world scenarios, environments are inherently multimodal, spanning modalities such as textual and visual data. Consequently, developing multimodal agents has become a primary frontier in achieving general-purpose autonomy~\cite{shridhar2020alfred, tang2019coin}. However, the successful deployment of these agents needs to continuously accumulate knowledge from streams of heterogeneous observations, organize this information effectively, and retrieve contextually relevant insights to support flexible decision-making under varying constraints. 
At the core of this capability lies the memory module, the essential substrate for transforming raw multimodal streams into persistent, actionable knowledge.

Recent advances in memory-augmented multimodal agents have made significant progress. 
MemGPT~\cite{packer2023memgpt} introduces explicit memory management for LLM-based agents, enabling them to handle information beyond context window limitations. MovieChat~\cite{song2024moviechat} and MA-LMM~\cite{he2024malmm} construct persistent memory structures for long-form video understanding, while VideoAgent~\cite{wang2024videoagent} employs iterative frame selection guided by memory. M3-Agent~\cite{m3agent} further advances this line by introducing VideoGraph, which organizes memories into episodic nodes representing specific events and semantic nodes capturing abstracted concepts, drawing inspiration from human cognitive systems~\cite{tulving1972episodic}. These approaches typically employ retrieval-augmented generation (RAG)~\cite{lewis2020retrieval} with vector-based similarity search, achieving strong performance on factual recall and semantic matching tasks.


\noindent\textbf{Motivations}. 
Despite their progress, most multimodal agent memories are vector-centric, relying primarily on neural embeddings for storage and retrieval and sometimes being augmented with lightweight relational structures~\cite{DBLP:journals/fcsc/WangMFZYZCTCLZWW24}.
Such designs are well-suited for System 1 style reasoning~\cite{kahneman2011thinking}, enabling associative inference through semantic similarity, including inductive reasoning that generalizes from past experiences, analogical reasoning that transfers knowledge across related contexts~\cite{DBLP:journals/corr/abs-2212-09196}, and associative recall that links semantically proximate concepts. These capabilities are effective for fuzzy matching and commonsense retrieval in open-ended environments. However, they remain fundamentally limited in supporting System 2 style reasoning that is critical for real-world decision making under explicit constraints~\cite{DBLP:journals/corr/abs-2305-15771, DBLP:conf/iclr/Saparov023}. This includes deductive reasoning over dependencies such as understanding prerequisites and ordering, abductive inference from partial observations, and constraint-aware reasoning involving constraint satisfaction or alternative discovery when constraints are violated~\cite{DBLP:journals/corr/abs-2206-14576}. These scenarios demand explicit structure reasoning mechanisms~\cite{DBLP:journals/fcsc/WangMFZYZCTCLZWW24}. 


\begin{example}
Consider an agent in Figure~\ref{fig:example}  tasked with helping Jack make a fruit salad. The agent knows that fruit salad requires fruits, a bowl, and a spoon, and that the steps are: chop, mix, and serve. From its memory, it knows that Jack has already chopped the fruits (ID 798), the bowl at home is broken (ID 2341), and a store downstairs has bowls, just 1 minute away (ID 5231). When asked, “What should Jack do next?”, a purely vector-based retrieval system can identify memory fragments mentioning “fruit salad” or “chopped fruits” based on semantic similarity. However, it cannot take into account that the bowl at home is broken or that a store nearby has bowls. As a result, it will only suggest mixing the fruits, ignoring the practical constraints. 
\end{example}


\begin{figure*}[t]
\centering
\includegraphics[width=0.95\textwidth]{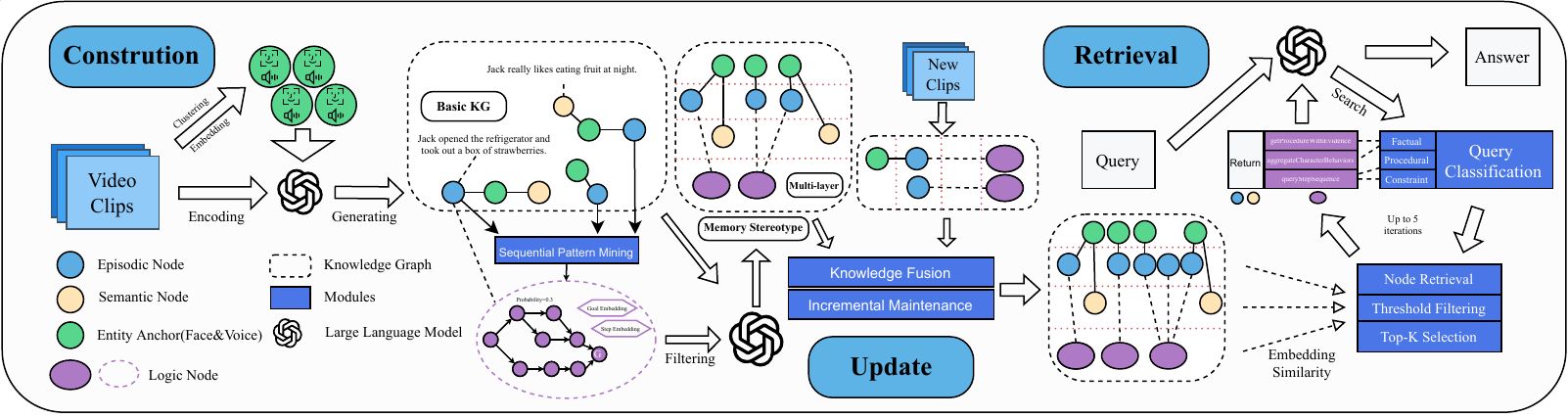}
\caption{Overview of the \ModelName{} framework. Raw multimodal data is processed through a three-layer memory prototype, maintained via the SK-Gen mechanism for distillation and incremental updates, and accessed through a hybrid retrieval framework designed for complex reasoning. }
\vspace{-4mm}
\label{fig:framework}
\end{figure*}

Neuro-symbolic AI~\cite{DBLP:series/faia/342, DBLP:journals/air/GarcezL23} aligns neural representations with System 1 intuitive pattern matching and symbolic rules with System 2 deterministic logic, offering a promising research direction. However, designing a memory system that supports this dual-process architecture poses three fundamental challenges:
(1) First, how to conceive a unified architecture in which neural representations and symbolic rules can coexist and interoperate effectively.
(2) Second, how to construct and continuously update such a memory system from large-scale multimodal data streams.
(3) Third, how to effectively retrieve and utilize both neural and symbolic memories to advance the reasoning capabilities of intelligent agents.

\noindent\textbf{Our approaches}. 
Guided by these challenges, we propose \textbf{\ModelName}, a neuro-symbolic memory system that combines neural representations and symbolic rules to bridge the gap between System 1 intuitive reasoning and System 2 deterministic reasoning. 
\ModelName{} provides a comprehensive framework comprising three integrated modules: (1) a unified prototype, (2) a scalable construction and update mechanism, and (3) a synergistic retrieval process.


First, we design a hierarchical framework that organizes information across three specialized layers: the episodic layer, the semantic layer, and the logic layer. The episodic layer records fine-grained multimodal observations with timestamps, while the semantic layer manages abstracted entity types and attributes. At the core of our system, the logic layer represents procedural knowledge through logic nodes. Each node couples a neural index with an explicit symbolic structure. The neural index is an aggregated neural embedding that enables discovery via vector similarity, and the symbolic structure is a procedural Directed Acyclic Graph (DAG) that encodes deterministic logical rules and step-wise dependencies.

Second, to construct and maintain the memory, we implement the \GenerationName{} mechanism, which automatically distills structured knowledge from accumulated multimodal experiences. It extracts temporally ordered action sequences and applies pattern mining to detect recurring procedural knowledge. Once a pattern is identified, the system constructs a symbolic DAG and computes the corresponding neural index. To handle continuous observations, the mechanism supports incremental updates, utilizing exponential moving averages to refine the neural index and structural modifications to update edges in the procedural DAGs and transition frequencies without full reconstruction.

Third, we develop a hybrid retrieval strategy that synergistically combines similarity-based search with deterministic symbolic query functions. The process begins by classifying incoming queries into factual, constraint, or procedural-based types to prioritize relevant memory layers. While neural retrieval identifies relevant nodes through embedding similarity, the system then executes symbolic functions directly on the retrieved structures. This allows the agent to generate precise, reproducible answers that satisfy explicit constraints, effectively merging intuitive semantic matching with rigorous, deterministic reasoning. 




\noindent\textbf{Contributions}. Our main contributions are as follows:
\begin{itemize}
    \item We propose \ModelName, a three-layer neuro-symbolic memory architecture that integrates an episodic layer, a semantic layer, and a neuro-symbolic layer to facilitate both neural discovery via memory prototypes and precise reasoning through symbolic structures with deterministic query functions.
    \item We develop \GenerationName{} to automate the extraction of structured knowledge from observations, which supports incremental updates to maintain consistency as new observations arrive.
    \item We introduce a hybrid retrieval mechanism that classifies queries by type and applies multi-level retrieval with symbolic enhancement for better reasoning.
    \item We demonstrate through experiments on real-world benchmarks that our approach achieves 4.35\% improvement in reasoning accuracy, with particularly strong gains on constraint-based queries.
\end{itemize}

    \section{Related Work}
    \label{sec.related}

\noindent
\textbf{Memory-Augmented Agents in Single Modality.}
Memory management has evolved to handle long-term context beyond fixed window constraints~\cite{peiyuan2024agile}. Common strategies involve storing entire trajectories, such as dialogues~\cite{lin2023mm,mei2024aios,wang2023enhancing,zhong2024memorybank} or execution traces~\cite{liu2024llm, hu2025hiagent,liu2024agentlite, sarch2023open, shang2024agentsquare}. Other methods utilize summaries~\cite{hu2025hiagent,wang2023enhancing,zhong2024memorybank} or latent embeddings~\cite{diko2025rewind, liu2024memlong, song2024moviechat, zhang2024flash} for persistence. Specialized architectures like MemGPT~\cite{packer2023memgpt} and Voyager~\cite{wang2023voyager} provide finer control over memory and skill acquisition. These single-modal systems focus on textual or symbolic data, forming the basis for more complex multimodal designs.

\noindent
\textbf{Multimodal Memory Systems.}
Recent work extends memory to multimodal environments, particularly for long-form video understanding~\cite{long2025seeing}. One approach uses pure neural representations, storing memories as latent embeddings. MA-LMM~\cite{he2024malmm}, MovieChat~\cite{song2024moviechat}, and Flash-VStream~\cite{zhang2024flash} employ memory mechanisms to compress video tokens or store encoded visual features~\cite{bao2024boosting, zhang2024internlm, he2024ma}. Another direction integrates relational graphs with neural embeddings. M3-Agent~\cite{m3agent} organizes memories into entity-centric VideoGraphs. Socratic Models~\cite{lin2023mm, zhang2024mm} use multimodal models to generate language-based descriptions. While effective for semantic matching via RAG~\cite{lewis2020retrieval}, these systems often lack the explicit symbolic structures needed for deterministic reasoning under complex constraints.

\noindent
\textbf{Neuro-Symbolic Integration.}
Neuro-symbolic AI combines neural perception with symbolic logic~\cite{garcez2020neurosymbolic, garcez2019neural}. Early models like Neuro-Symbolic VQA~\cite{yi2018neural} disentangle perception from reasoning by executing symbolic programs. Modern approaches like Program-aided Language Models~\cite{gao2023pal} and ViperGPT~\cite{suris2023vipergpt} use LLMs to generate executable code. Other methods embed symbolic knowledge into networks~\cite{rocktaschel2017end} or extract rules from representations~\cite{evans2018learning, yang2017differentiable}. However, symbolic execution is often treated as a one-off tool. Our work introduces a Neuro-Symbolic Layer where symbolic structures are persistently stored and updated alongside neural representations, merging retrieval efficiency with reasoning precision.

    \vspace{-2mm}
    \section{Problem Statement}
    \label{sec.preliminary}

An intelligent agent perceives its environment through multimodal observations, maintains a memory of past experiences, and leverages these memories to make informed decisions. In modern architectures, Large Language Models (LLMs) serve as the cognitive core, while a memory system $\mathcal{M}$ enables the agent to accumulate knowledge over extended time horizons. Such memory-augmented agents must store past experiences, react to new stimuli, and maintain continuity across long operational periods.

Specifically, the agent perceives a continuous stream of multimodal observations $\mathcal{O}=\{o_1, o_2, \ldots\}$, where each observation $o_t$ comprises visual frames $o_t^v$, audio signals $o_t^a$, and textual descriptions $o_t^s$ (e.g., transcribed speech or subtitles). Given a query $q \in \mathcal{Q}$, the agent must retrieve relevant information from $\mathcal{M}$ and generate an accurate answer $a \in \mathcal{A}$.

\noindent
\textbf{Problem Statement.}
Consider an agent that continuously receives a stream of video observations $\mathcal{O}$. The objective of a multimodal memory system is to maintain a structured memory $\mathcal{M}$ that captures long-term dependencies. For any query $q \in \mathcal{Q}$, the agent needs to dynamically retrieve relevant context from $\mathcal{M}$ to generate an accurate answer $a \in \mathcal{A}$, effectively supporting factual recall, procedural understanding and constraint-aware reasoning, within an online, memory-efficient framework.

    \vspace{-2mm}
    \section{Overall Framework of \ModelName{}}
    \label{sec.method}

\label{sec:method}



In this section, we present NS-Mem, a neuro-symbolic memory framework designed to unify probabilistic semantic matching with deterministic structural reasoning. The frequently used notations are summarized in Appendix~\ref{tab:symbol}. The overall time complexity analysis is summarized in ~\ref{sec:complexity}.

\subsection{Architecture of \ModelName{}}
\label{sec:architecture}

Effective reasoning in open-world environments demands a dual capability: the flexibility to recall concrete experiences (System 1) and the rigor to apply deterministic procedural rules (System 2). To unify these capabilities, we introduce a three-layer memory architecture designed to capture diverse forms of knowledge:

\begin{definition}

The memory system $\mathcal{M} = (\mathcal{L}_{epi}, \mathcal{L}_{sem}, \mathcal{L}_{logic}, \mathcal{E})$ consists of three layers: the episodic layer $\mathcal{L}_{epi}$, which stores timestamped observations; the semantic layer $\mathcal{L}_{sem}$, which maintains entity coherence; and the logic layer $\mathcal{L}_{logic}$, which encodes procedural rules.
The edges $\mathcal{E}$ define the relationships between these layers. Specifically, the logic layer $\mathcal{L}_{logic}$ has directed edges to both the episodic and semantic layers, denoted as $\mathcal{E}_{{logic} \to {epi}}$ and $\mathcal{E}_{{logic} \to {sem}}$, respectively. Moreover, the episodic and semantic layers are interconnected via shared entity anchors, represented by the edges $\mathcal{E}_{{epi} \leftrightarrow {sem}}$.


\end{definition}

Specifically, 
the proposed memory system consists of the Episodic Layer, 
Semantic Layer and Logic Layer.

\noindent\textbf{(1) Episodic Layer}. The episodic layer serves as the observational foundation of the memory system, recording fine-grained event descriptions grounded in multimodal perception. Each episodic node $e = (t, \mathbf{d}, \mathbf{v}_e)$ stores the timestamp $t$ indicating its temporal position within the observation stream, a textual description $\mathbf{d}$, and the corresponding embedding $\mathbf{v}_e = \phi(\mathbf{d}) \in \mathbb{R}^d$. The description $\mathbf{d}$ is an atomic event narrative that synthesizes visual and auditory signals into a unified textual representation.
Each description explicitly references recognized entities through perceptual entity anchors, which are persistent identity nodes established via clustering of face embeddings and voice embeddings (detailed in Section~\ref{sec:distillation}). These entity references create edges from episodic nodes to the corresponding anchors, enabling entity-centric indexing so that all events involving a given identity can be efficiently retrieved across the entire temporal span of the memory. 

\noindent\textbf{(2) Semantic Layer}. The semantic layer abstracts and consolidates knowledge at a higher level, maintaining entity-centric summaries that evolve as new observations accumulate. Each semantic node $s = (\text{type}, \text{attrs}, \mathbf{v}_s)$ encodes a specific facet of abstracted knowledge, where $\text{type}$ categorizes the knowledge modality, $\text{attrs}$ accumulates the descriptive content, and $\mathbf{v}_s = \phi(\text{attrs}) \in \mathbb{R}^d$ is the node embedding. 
Like episodic nodes, semantic nodes are connected to entity anchors through edges, but they differ fundamentally in their update semantics. Rather than appending every observation as a new node, the semantic layer employs a reinforcement-based consolidation policy to maintain knowledge coherence: when a new semantic observation $s_{new}$ arrives, the system computes its embedding similarity against existing semantic nodes that share at least one entity anchor; if the similarity exceeds a positive threshold $\tau_{pos}$, the existing node's confidence is reinforced by incrementing its associated edge weight, and no new node is created; only when no sufficiently similar node exists is a new semantic node inserted into the layer. 
\noindent\textbf{(3) Logic Layer}. Each Logic Node $\mathcal{N} = (id, c, \mathbf{I}, \mathcal{G}, \mathcal{F})$ pairs Index Vectors $\mathbf{I}$ for neural discovery with a Procedural DAG $\mathcal{G}$ for symbolic querying, along with a goal description $c$ and deterministic query functions $\mathcal{F}$. Each node also maintains \texttt{episodic\_links} $\subseteq \mathcal{L}_{epi}$ referencing supporting observations for evidence traceability.

\noindent\underline{Index Vectors.}
Since a procedure comprises multiple steps, user queries may match either the high-level goal or specific intermediate steps. To accommodate both granularities, we maintain dual-level Index Vectors $\mathbf{I} = \{\mathbf{i}_{goal}, \mathbf{i}_{step}\}$: the goal-level index $\mathbf{i}_{goal} = \phi(c)$ embeds the goal description of the procedure, while the step-level index $\mathbf{i}_{step} = \frac{1}{|S|}\sum_{s \in S} \phi(s)$ averages embeddings of all descriptions $S = \{s_1, \ldots, s_n\}$. This dual-index design mirrors how search engines index both document titles and contents, ensuring that both goal-oriented and step-specific queries can locate the relevant node.

\noindent\underline{Procedural DAG.}
Index Vectors solve the discovery problem but cannot answer structural questions such as step ordering or constraint satisfaction. For such queries, we represent explicit symbolic structure as a Procedural DAG $\mathcal{G} = (V, E, A)$ where $V = \{v_0, v_1, \ldots, v_n, v_{n+1}\}$ includes distinguished nodes $v_0 = \texttt{START}$ and $v_{n+1} = \texttt{GOAL}$, $E \subseteq V \times V$ encodes valid step transitions, and $A: V \rightarrow 2^{\text{Attr}}$ maps each node to relevant attributes.
DAGs offer three advantages: expressiveness through concurrent execution paths, constraint-aware filtering for alternatives, and probabilistic semantics via absorbing Markov chain modeling.
In our implementation, observations from individual videos initially produce single-path DAGs; through knowledge fusion (Section~\ref{sec:fusion}), these merge into multi-path DAGs capturing procedural variations.

\noindent\textbf{(4) Edges}. These layers play complementary roles and interact with each other to support complex reasoning and decision-making.
The Logic Layer connects to the Episodic Layer through \texttt{episodic\_links}: each Logic Node $\mathcal{N}$ maintains references to specific episodic observations that serve as evidence grounding for the abstracted procedure, enabling the system to trace back to the underlying observations when needed. In contrast, the Logic Layer relates to the Semantic Layer through conceptual extension: while the Semantic Layer stores static entity attributes, the Logic Layer captures dynamic behavioral patterns involving those entities. 

Within each layer, the Episodic and Semantic layers organize around entity anchors---perceptual nodes representing recognized identities from video observations. Episodic nodes and Semantic nodes both connect to relevant entity anchors, with temporal ordering represented implicitly through timestamps rather than explicit edges. The Logic Layer introduces Logic Nodes that are relatively independent of each other (representing distinct procedures) but connect downward to episodic evidence through \texttt{episodic\_links}.

\subsection{Memory Construction and Maintenance}
\label{sec:construction}

Next, we will introduce \GenerationName{} that automatically constructs and updates the memory. The process is summarized in Algorithm~\ref{alg:skgen}.




\subsubsection{Memory Construction Pipeline}
\label{sec:distillation}

As multimodal video streams arrive, the system segments them into clips and extracts perceptual features: face embeddings via ArcFace~\cite{deng2019arcface} and voice embeddings via ERes2Net~\cite{chen2023eres2net}, with detected instances clustered to establish persistent entity anchors. A vision-language model then processes each clip along with the detected face and voice features, generating two types of textual outputs: (1) atomic event descriptions capturing observable actions, dialogues, and scene details with entity references, and (2) high-level conclusions summarizing character attributes, interpersonal relationships, and contextual knowledge. 
The former become Episodic Nodes $e = (t, \mathbf{d}, \mathbf{v}_e)$; the latter populate Semantic Nodes $s = (\text{type}, \text{attrs}, \mathbf{v}_s)$  following the layer-specific update policies defined in Section~\ref{sec:architecture}. Algorithm~\ref{alg:skgen} formalizes the complete construction pipeline, including observation processing (Phase~1) and Logic Node distillation (Phase~2).

The pipeline then transforms episodic memories into Logic Nodes through five sequential steps.

\textbf{Step 1: Action Sequence Extraction.} From the observation stream $\mathcal{O} = \{o_1, o_2, \ldots, o_K\}$ and episodic memories $\mathcal{L}_{epi}$, we extract temporally-ordered action sequences. For each video or session $v$, we obtain $S_v = \textsc{ExtractActions}(\{e \in \mathcal{L}_{epi} : e.\text{video} = v\})$, producing the sequence set $S_{seq} = \{S_1, S_2, \ldots, S_V\}$ where each $S_v = [a_1, a_2, \ldots, a_L]$ is an ordered list of actions. Action extraction may use rule-based pattern matching on episodic descriptions or LLM-based conversion to structured action representations.

\textbf{Step 2: Sequential Pattern Mining.} We apply PrefixSpan~\cite{pei2001prefixspan}, a sequential pattern mining algorithm, to discover recurring procedural motifs. Unlike set-based mining algorithms, PrefixSpan preserves temporal ordering: the pattern $[\text{cut}, \text{blanch}]$ is distinct from $[\text{blanch}, \text{cut}]$. The algorithm efficiently explores the pattern space through projected databases, outputting all patterns $p$ satisfying $\text{support}(p) = |\{S \in S_{seq} : p \subseteq S\}| / |S_{seq}| \geq \sigma$ where $\sigma$ is the minimum support threshold. This yields candidate patterns $\mathcal{P}_{cand} = \{p : \text{support}(p) \geq \sigma\}$.

\textbf{Step 3: Knowledge Verification.} Frequent patterns are not necessarily meaningful procedures. For instance, $[\text{pick\_up}, \text{put\_down}]$ may be frequent but does not constitute coherent knowledge. We employ an LLM-based filter to evaluate whether each candidate represents complete, reusable knowledge: $score_p = \textsc{LLMVerify}(p, \mathcal{M}_{rel})$ where $\mathcal{M}_{rel}$ contains related memories as context. Only patterns with $score_p > \tau$ proceed to structure extraction.

\textbf{Step 4: DAG Construction.} For verified patterns, we construct the Procedural DAG by creating nodes $V = \{\texttt{START}\} \cup \{v_a : a \in p\} \cup \{\texttt{GOAL}\}$, edges $E = \{(v_i, v_{i+1}) : \text{consecutive actions in } p\}$, and extracting attributes $A(v)$ from associated episodic memories. We also establish \texttt{episodic\_links} pointing to the specific episodic nodes supporting this pattern.

\textbf{Step 5: Index Generation.} Finally, we compute the Index Vectors of the logic node by averaging all the vectors: $\mathbf{i}_{goal} = \phi(p.\text{goal})$ for goal-level matching and $\mathbf{i}_{step} = \frac{1}{|p.\text{steps}|}\sum_{s \in p.\text{steps}} \phi(s)$ for step-level matching. The complete Logic Node $\mathcal{N} = (id, c, \mathbf{I}, \mathcal{G}, \mathcal{F})$ is then added to $\mathcal{L}_{logic}$.


\subsubsection{Incremental Maintenance}
\label{sec:incremental}

When new observations arrive in an open-world setting, we avoid costly full reconstruction by incrementally updating affected Logic Nodes through coupled neural and symbolic refinement.

\noindent\textbf{Matching and Gating.} For each new observation $o_{new}$, we first identify the best-matching Logic Node via neural discovery: $\mathcal{N}^* = \arg\max_{\mathcal{N} \in \mathcal{L}_{logic}} \text{sim}(\phi(o_{new}), \mathbf{I}_{\mathcal{N}})$. Updates proceed only if the similarity exceeds a gating threshold $\delta$, preventing noise from corrupting established knowledge. Observations with similarity below $\delta$ may represent novel procedures; these accumulate in a candidate pool until sufficient evidence triggers a new distillation cycle.

\noindent\textbf{Neural Refinement via EMA.} As new observations accumulate, the semantic distribution of a procedure may drift---users may describe the same task with varying terminology, or the procedure itself may evolve. Static Index Vectors would become increasingly misaligned with current usage patterns, degrading retrieval accuracy. To maintain alignment while avoiding catastrophic forgetting of historical semantics, we update Index Vectors using Exponential Moving Average (EMA):
\begin{equation}
\mathbf{i}_{t+1} = \beta \cdot \mathbf{i}_t + (1-\beta) \cdot \phi(o_{new}), \quad \beta \in [0,1]
\end{equation}
where $\mathbf{i}_t$ is the current index vector and $\beta$ (default 0.9) controls the decay rate. EMA naturally balances stability with adaptability: high $\beta$ values yield stable indexes that resist noise, while lower values increase sensitivity to distributional shifts. Unlike direct replacement, which catastrophically overwrites historical information, EMA preserves established semantics while gradually incorporating new linguistic variations.

\noindent\textbf{Symbolic Refinement via Transition Statistics.} Beyond neural refinement, the symbolic structure also benefits from new observations. Real-world procedures exhibit variation: some steps are more commonly taken than others, and knowing these frequencies enables probabilistic reasoning about typical execution paths and alternative reliability. To capture this, we maintain edge-level transition counts $N_{ij}$ for each $(v_i, v_j) \in E$ in the Procedural DAG $\mathcal{G} = (V, E)$. Each observed transition $v_i \rightarrow v_j$ increments the count: $N_{ij} \leftarrow N_{ij} + 1$, yielding the estimated transition probability:
\begin{equation}
\hat{P}(v_j | v_i) = \frac{N_{ij}}{\sum_{k:(v_i, v_k) \in E} N_{ik}}
\end{equation}

When observations reveal previously unseen but valid action, we expand the graph by inserting new nodes or edges with initialized statistics, thereby increasing coverage of procedural diversity while preserving determinism for existing structure. 
A concern is whether these incrementally updated statistics actually converge to meaningful values, or might drift arbitrarily. Fortunately, the counting-based estimator enjoys strong theoretical guarantees:

\begin{theorem}[Posterior Consistency]
\label{thm:consistency}
As the number of observations $n \to \infty$, the estimated transition probabilities converge almost surely to the true underlying probabilities: $\hat{P}(v_j | v_i) \xrightarrow{a.s.} P^*(v_j | v_i)$.
\end{theorem}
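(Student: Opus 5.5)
Before the argument, I will fix a probabilistic model, since the statement as written is informal. I will assume that observed transitions out of a fixed node $v_i$ are generated by an underlying Markov mechanism on the Procedural DAG $\mathcal{G}$. Each time $v_i$ is visited, the successor is drawn independently (given the visit) from a fixed distribution $P^*(\cdot \mid v_i)$ supported on the out-neighbours of $v_i$. I interpret ``$n \to \infty$'' as the number of visits $n_i = \sum_{k} N_{ik}$ to $v_i$ tending to infinity. This holds almost surely whenever $v_i$ is visited with positive probability in each of infinitely many i.i.d. executions, by the second Borel--Cantelli lemma. Note that the estimator $\hat P$ is exactly the ratio of empirical transition counts.

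\textbf{Step 1: reduce to a strong law.} For each visit $m = 1, 2, \ldots$ of $v_i$, let $X_m^{(ij)} = \mathbf{1}[\text{the $m$-th departure from } v_i \text{ goes to } v_j]$. By the strong Markov property, or simply by the i.i.d. execution assumption, the sequence $(X_m^{(ij)})_m$ is i.i.d. Bernoulli with mean $P^*(v_j \mid v_i)$. Then
\begin{equation*}
N_{ij} = \sum_{m=1}^{n_i} X_m^{(ij)}, \qquad \sum_k N_{ik} = n_i .
\end{equation*}
The Kolmogorov strong law of large numbers gives $\frac{1}{n_i}\sum_{m\le n_i} X_m^{(ij)} \to P^*(v_j \mid v_i)$ almost surely. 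Composing this with the random time $n_i \to \infty$ a.s. preserves almost sure convergence, since a subsequence of an a.s. convergent sequence converges to the same limit.

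\textbf{Step 2: handle graph expansion.} New edges inserted during incremental maintenance start from initialized statistics, i.e. some pseudo-count $N^{0}_{ij}$. This is where the ``posterior'' reading enters. If the initialization is a Dirichlet prior with finite pseudo-counts $\alpha_{ij}$, the estimator becomes $(N_{ij}+\alpha_{ij})/(n_i + \sum_k \alpha_{ik})$. The bounded offsets vanish after dividing by $n_i$, so the same limit holds. Since the out-degree of $v_i$ is finite and the DAG is finite, a union over the finitely many pairs $(i,j)$ gives simultaneous almost sure convergence. Edges that are added only after finitely many steps contribute only finitely many shifted terms, which again do not affect the limit.

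\textbf{Main obstacle.} The substantive difficulty is not the limit computation but justifying that the transitions out of $v_i$ are i.i.d. with a fixed law. Two things can break this. First, the gating threshold $\delta$ in matching admits observations in a data-dependent way. Second, action extraction may mislabel steps. My plan is to state this as an explicit assumption: the gating and labelling are stationary and independent of future transitions, so the admitted subsequence is still i.i.d. by optional skipping (Doob's theorem on selection by stopping times). Under that assumption, the proof above goes through. Without it, $P^*$ must be read as the law of the \emph{admitted} transitions.
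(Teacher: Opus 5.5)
Your proposal is correct and follows essentially the same route as the paper's appendix proof: write the estimator as an empirical transition frequency, possibly with Dirichlet pseudo-counts as in the paper's posterior-mean form; note that the bounded prior offsets vanish as the number of departures from $v_i$ grows; and conclude by the strong law of large numbers. Your version is more careful than the paper's, since it makes explicit the i.i.d.-successor model, the random index $n_i \to \infty$, and the graph-expansion case, and it needs neither the paper's extra Beta--Binomial argument for node success rates (outside the theorem's statement) nor its closing appeal to Doob's posterior consistency theorem.
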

\textit{Proof.} See Appendix~\ref{proof:consistency}.

\begin{algorithm}[t]
\caption{\GenerationName{}: Memory Construction and Maintenance}
\label{alg:skgen}
\footnotesize
\begin{algorithmic}[1]
\Require Observation stream $\mathcal{O}=\{o_1, o_2, \ldots, o_K\}$, consolidation thresholds $\tau_{pos}, \tau_{neg}$, support threshold $\sigma$, verification threshold $\tau$, gating threshold $\delta$, EMA coefficient $\beta$
\Ensure Memory system $\mathcal{M} = (\mathcal{L}_{epi}, \mathcal{L}_{sem}, \mathcal{L}_{logic})$

\State \textbf{// Phase 1: Observation Processing}
\State $\mathcal{A} \gets \emptyset$; $\mathcal{L}_{epi} \gets \emptyset$; $\mathcal{L}_{sem} \gets \emptyset$
\For{each clip $o_k$ in $\mathcal{O}$}
    \State $\mathbf{F}_k \gets \textsc{ArcFace}(o_k)$; \; $\mathbf{U}_k \gets \textsc{ERes2Net}(o_k)$ \Comment{Perceptual extraction}
    \State $\mathcal{A} \gets \textsc{ClusterAndTrack}(\mathcal{A}, \mathbf{F}_k, \mathbf{U}_k)$ \Comment{Entity anchor update}
    \State $D_k, C_k \gets \textsc{VLM}(o_k, \mathcal{A})$ \Comment{Descriptions \& conclusions}
    \For{each description $\mathbf{d} \in D_k$} \Comment{Episodic construction}
        \State $e \gets (t_k, \mathbf{d}, \phi(\mathbf{d}))$; \; $\mathcal{L}_{epi} \gets \mathcal{L}_{epi} \cup \{e\}$
        \State Link $e$ to each entity anchor $a \in \textsc{ParseEntities}(\mathbf{d})$
    \EndFor
    \For{each conclusion $s_{new} \in C_k$} \Comment{Semantic consolidation}
        \State $\mathcal{S}_{cand} \gets \{s \in \mathcal{L}_{sem} : \textsc{Entities}(s_{new}) \subseteq \textsc{Entities}(s)\}$
        \If{$\exists\, s \in \mathcal{S}_{cand}$: $\text{sim}(\phi(s_{new}), \mathbf{v}_s) > \tau_{pos}$}
            \State \textsc{Reinforce}$(s)$ \Comment{Edge weights $+1$}
        \ElsIf{$\exists\, s \in \mathcal{S}_{cand}$: $\text{sim}(\phi(s_{new}), \mathbf{v}_s) < \tau_{neg}$}
            \State \textsc{Weaken}$(s)$ \Comment{Edge weights $-1$; prune if $\leq 0$}
        \Else
            \State $\mathcal{L}_{sem} \gets \mathcal{L}_{sem} \cup \{(\text{type}_{new}, s_{new}, \phi(s_{new}))\}$
        \EndIf
    \EndFor
\EndFor

\State \textbf{// Phase 2: Logic Distillation}

\State $\mathcal{L}_{logic} \gets \emptyset$
\State $S_{seq} \gets \textsc{ExtractActionSequence}(\mathcal{O}, \mathcal{L}_{epi})$ \Comment{Step 1}
\State $\mathcal{P}_{cand} \gets \textsc{PrefixSpan}(S_{seq}, \sigma)$ \Comment{Step 2}
\For{each pattern $p \in \mathcal{P}_{cand}$}
    \State $\mathcal{M}_{rel} \gets \textsc{RetrieveRelatedMemories}(p, \mathcal{L}_{epi})$
    \State $score_p \gets \textsc{LLMVerify}(p, \mathcal{M}_{rel})$ \Comment{Step 3}
    \If{$score_p > \tau$}
        \State $\mathcal{G} \gets \textsc{ConstructDAG}(p, \mathcal{M}_{rel})$ \Comment{Step 4}
        \State $\mathbf{i}_{goal} \gets \phi(p.\text{goal})$; \quad $\mathbf{i}_{step} \gets \text{Mean}(\{\phi(s) : s \in p.\text{steps}\})$ \Comment{Step 5}
        \State $\mathcal{N} \gets (id, p.\text{goal}, \{\mathbf{i}_{goal}, \mathbf{i}_{step}\}, \mathcal{G}, \mathcal{F})$
        \State $\mathcal{L}_{logic} \gets \mathcal{L}_{logic} \cup \{\mathcal{N}\}$
    \EndIf
\EndFor

\State \textbf{// Phase 3: Incremental Maintenance}
\For{each new observation $o_{new}$}
    \State $\mathcal{N}^* \gets \arg\max_{\mathcal{N} \in \mathcal{L}_{logic}} \text{sim}(\phi(o_{new}), \mathbf{I}_{\mathcal{N}})$ \Comment{Matching}
    \If{$\text{sim}(\phi(o_{new}), \mathbf{I}_{\mathcal{N}^*}) > \delta$} \Comment{Gating}
        \State $\mathbf{i} \gets \beta \cdot \mathbf{i} + (1-\beta) \cdot \phi(o_{new})$ \quad for $\mathbf{i} \in \mathbf{I}_{\mathcal{N}^*}$ \Comment{Neural Refinement}
        \State $\mathcal{G}_{\mathcal{N}^*} \gets \textsc{UpdateTransitions}(\mathcal{G}_{\mathcal{N}^*}, o_{new})$ \Comment{Symbolic Refinement}
    \EndIf
\EndFor
\State \Return $\mathcal{M}$
\end{algorithmic}
\end{algorithm}

\subsubsection{Knowledge Fusion}
\label{sec:fusion}

Real-world procedures rarely have a single canonical execution. Different individuals may perform the same task with variations in step ordering, optional steps, or alternative methods. When the same procedure is observed across multiple videos, each observation initially yields a single-path DAG representing one execution variant. Maintaining these as separate Logic Nodes would fragment the knowledge base, making retrieval less effective and preventing the system from recognizing that these variants represent the same underlying procedure. 

We propose a Knowledge Fusion phase that merges single-path DAGs into a unified multi-path DAG through three operations: (1) node alignment via embedding similarity and optimal bipartite matching to identify semantically equivalent steps across DAGs; (2) edge union to preserve all observed transitions, creating branching points where procedures diverge; and (3) statistic pooling to combine transition counts via Bayesian conjugacy. The fused DAG captures the full space of procedural variations while maintaining accurate transition statistics, enabling constraint-based queries to explore all valid alternatives. 
The following theorem demonstrates that the fusion operation is sound, ensuring that it does not introduce spurious paths or corrupt statistics:

\begin{theorem}[Fusion Consistency]
\label{thm:fusion}
Assuming input DAGs are valid observations of the same underlying procedure, the fusion operation preserves correctness: aligned nodes correspond to the same action with high probability, pooled parameters equal the posterior from the union of observations, and the fused structure retains all valid alternatives.
\end{theorem}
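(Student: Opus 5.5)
The plan is to split Theorem~\ref{thm:fusion} into its three claims and prove each under an explicit generative model. Each underlying procedure step $a$ has a latent semantic prototype $\mu_a$. An observed step description embeds as $\phi(s)=\mu_a+\epsilon$, where $\epsilon$ is sub-Gaussian noise with parameter $\sigma_\epsilon$. Distinct actions are separated by a margin, $\min_{a\neq b}\|\mu_a-\mu_b\|\ge\Delta$. Transition counts on each outgoing edge set are multinomial draws, and the statistics of every node $v_i$ carry a Dirichlet prior $\mathrm{Dir}(\alpha_{i\cdot})$. This is the same counting model that underlies Theorem~\ref{thm:consistency}.

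\textbf{Alignment.} Consider two DAGs whose steps come from the same action set. I will bound the probability that any cross pair $(s,s')$ with different true actions has higher similarity than the correctly matched pair. By sub-Gaussian concentration and a union bound over the $O(n^2)$ pairs, with probability at least $1-n^2\exp(-c\Delta^2/\sigma_\epsilon^2)$ the similarity matrix is ``diagonally dominant'' with respect to the true correspondence. Specifically, every correct pair beats every incorrect pair sharing a row or column. Under that event the true correspondence is the unique maximum-weight bipartite matching. Any other matching can be improved by uncrossing a pair of edges, which gives an exchange argument. It also passes the similarity threshold, so the optimal matching recovers it. This yields ``with high probability.''

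\textbf{Statistic pooling.} This part is exact algebra. The posterior after the first DAG's counts $N^{(1)}_{i\cdot}$ is $\mathrm{Dir}(\alpha_{i\cdot}+N^{(1)}_{i\cdot})$. Dirichlet--multinomial conjugacy, together with independence of the two observation batches given the parameters, gives
\[
p(\theta\mid N^{(1)},N^{(2)})\propto \mathrm{Dir}(\theta;\alpha+N^{(1)})\prod_k\theta_k^{N^{(2)}_k}=\mathrm{Dir}(\theta;\alpha+N^{(1)}+N^{(2)}).
\]
This is exactly the posterior computed from the union of observations. Pooling with the prior counted only once yields the same posterior, conditional on the alignment event from the first step. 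Consistency of the pooled estimate then follows from Theorem~\ref{thm:consistency}.

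\textbf{Structure.} On the alignment event, edge union maps each observed transition $(v_i,v_j)$ of an input DAG to an edge between the aligned images. Every observed path is therefore a path in the fused graph, so all valid alternatives are retained. I also have to check that the union is still acyclic. I will argue this from the assumption that all inputs are observations of one procedure with a common partial order, so that every edge respects a single topological order.

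The main obstacle is the alignment step. The theorem's ``no spurious paths'' reading is the delicate part, because union can create paths that combine prefixes and suffixes from different executions. I would first restrict the claim to retaining all observed paths and to soundness of individual edges. Recombined paths would then be declared valid only under a Markov (memoryless-transition) assumption on the procedure, which the absorbing-chain semantics already presumes.
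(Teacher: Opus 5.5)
Your split into alignment, statistic pooling and edge union, closed off by Theorem~\ref{thm:consistency}, is the same as the paper's. The difference is that you argue where the paper mostly asserts.

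\textbf{Alignment.} The paper only states that cosine similarity with threshold $0.8$ matches equivalent steps ``with high probability'' and that the Hungarian algorithm is optimal. Your margin-plus-noise model with a concentration and union bound actually proves a version of this.

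\textbf{Pooling.} The paper uses $\alpha_{\text{fused}}=\alpha_1+\alpha_2-1$ on Beta posteriors. That equals the pooled posterior only under a $\mathrm{Beta}(1,1)$ prior. Your Dirichlet rule with the prior counted once is the general form and contains the Beta case. State explicitly that both inputs' Dirichlets live on the union of out-edges, with the prior pseudo-count and a zero data count on edges an input never saw. That is what the paper's Laplace initialization of new edges amounts to.

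\textbf{Structure.} Your argument and the paper's are the same: images of input edges are contained in the union. Your caution about recombined prefix/suffix paths is justified. The paper's prose promises ``no spurious paths'', but its theorem and proof only give retention.

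Three local steps need repair.

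First, the exchange claim is false. Row/column dominance does not make every non-identity matching improvable by uncrossing two edges. Take unit diagonal, $W_{12}=W_{31}=0.9$, $W_{23}=0.5$ and all other entries $0.1$. This matrix is row- and column-dominant. Yet the matching $\{(1,2),(2,3),(3,1)\}$ has weight $2.3$, and each of its three 2-swaps has weight at most $2.0$. The conclusion still holds by a simpler route. Row dominance gives $W_{i\sigma(i)}\le W_{ii}$ for every $i$, with strict inequality whenever $\sigma(i)\neq i$. Summing, the identity is the unique maximizer.

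Second, your event only constrains rows that have a correct partner. A step present in only one input DAG has none, and such optional or alternative steps are exactly what fusion is meant to merge. Nothing in your event prevents such a step from being matched above threshold to a different action. Prove threshold separation instead: every same-action pair above $\tau$ and every different-action pair below it. Your concentration bound gives this when $\tau$ sits inside the margin, as long as $\sigma_\epsilon$ controls $\|\epsilon\|$. Otherwise, under cosine normalization, $\|\epsilon\|^2$ and $\langle\epsilon,\epsilon'\rangle$ scale with $d$ and the exponent is not dimension-free. The thresholded bipartite graph is then exactly the true partial correspondence, assuming no action repeats within a single input.

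Third, a common partial order does not give acyclicity if observations are merely linear extensions. Incomparable steps appearing as $\dots A\,B\dots$ in one video and $\dots B\,A\dots$ in another contribute $A\to B$ and $B\to A$, which is a cycle. You need every observed consecutive pair to be an edge of one ground-truth DAG $\mathcal{G}^*$; this is the absorbing-chain model you already invoke at the end. Then, on the alignment event, the fused edge set is a subgraph of $\mathcal{G}^*$, which yields acyclicity and edge soundness together. This assumption also makes explicit that genuine reorderings are excluded, even though the paper lists them among the variations fusion should absorb.
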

\textit{Proof.} See Appendix~\ref{proof:fusion}.

\subsection{Hybrid Retrieval and Reasoning}
\label{sec:retrieval}

With the memory architecture established, the agent now needs to access the right knowledge at query time. This requires not only finding relevant memories across heterogeneous layers, but also extracting structured information from Logic Nodes when queries demand precise, constraint-aware answers.

\subsubsection{Query Classification}
\label{sec:query_class}

Different queries demand different retrieval strategies. We classify incoming queries $q \in \mathcal{Q}$ into three types $T \in \{\textsf{factual}, \textsf{constraint}, \textsf{character}\}$: Factual queries request event recall or entity attributes and are best served by $\mathcal{L}_{epi}$ and $\mathcal{L}_{sem}$. Constraint queries impose explicit feasibility constraints and require symbolic operations on $\mathcal{G}$ for resolution. Character queries seek personality traits, behavioral patterns, or role summaries of specific individuals and benefit from cross-procedure aggregation via the Logic Layer.
Classification employs a two-tier approach. A rule-based pre-filter provides fast initial classification based on lexical patterns indicating constraint or character-based intent; others default to factual. For ambiguous cases, an LLM-based classifier refines the prediction. The resulting classification $T$ guides subsequent retrieval weighting.

\subsubsection{Multi-Granularity Retrieval}
\label{sec:multi_retrieval}

A single query may relate to memory at different levels of abstraction: users sometimes ask about high-level goals and sometimes about specific intermediate steps. To handle both, retrieval proceeds in two stages that leverage the dual-level Index Vectors.

Stage I (Neural Discovery) performs broad similarity search across all memory layers. For Logic Nodes, retrieval scores combine goal-level and step-level matching:
\begin{equation}
\text{score}(q, \mathcal{N}) = \alpha \cdot \text{sim}(\phi(q), \mathbf{i}_{goal}) + (1-\alpha) \cdot \text{sim}(\phi(q), \mathbf{i}_{step})
\end{equation}
where $\alpha \in [0,1]$ (default 0.3) balances high-level intent matching against specific content matching. This dual-index approach ensures that both goal-oriented and step-specific queries can discover relevant Logic Nodes. The initial retrieval returns candidates $\mathcal{R}_{init}(q) = \{n \in \mathcal{M} : \text{score}(q, n) > \theta\}$ where $\theta$ is the threshold.

Stage II (Type-Aware Re-ranking) re-weights candidates based on the query classification $T$ to prioritize the most relevant layer:
\begin{equation}
\text{score}_{final}(n) = \text{score}_{init}(n) \cdot w_{\text{layer}}(n, T)
\end{equation}
where $w_{\text{layer}}$ assigns higher weights to Episodic/Semantic nodes for $T = \{\textsf{factual}\}$ and to Logic nodes for $T \in \{\textsf{constraint}, \textsf{character}\}$. This strategy ensures that constraint and character queries surface Logic Nodes while factual queries prioritize episodic evidence.

\subsubsection{Symbolic Enhancement for Reasoning}
\label{sec:symbolic}

Once a relevant Logic Node is retrieved, its Procedural DAG $\mathcal{G}$ provides structured knowledge that can be queried programmatically. This is where the symbolic component becomes essential: rather than asking an LLM to ``figure out'' step sequences or filter by constraints from unstructured text, we directly traverse the DAG to extract exactly the information needed---enumerating valid paths, filtering by attribute constraints, or aggregating cross-procedure statistics. These operations are fast ($O(|\Pi| \cdot L)$ for path enumeration) and deterministic, ensuring reproducible answers.

Formally, a symbolic query function is a mapping $f: (\mathcal{G}, x) \mapsto y$ where $x$ is query-specific input and $y$ is the structured output. We implement three core functions:

(1) \texttt{getProcedureWithEvidence}$(goal) \rightarrow (\mathcal{G}, \texttt{episodic\_links})$: Returns the Procedural DAG along with supporting episodic evidence for a specified goal. This function enables evidence-grounded reasoning by providing both the abstract procedure and the concrete observations from which it was derived.

(2) \texttt{queryStepSequence}$(goal, \mathcal{C}) \rightarrow \Pi_{\mathcal{C}}$: Returns all paths from \texttt{START} to \texttt{GOAL} satisfying constraints $\mathcal{C}$. Formally:
\begin{equation}
\Pi_{\mathcal{C}} = \{\pi \in \Pi(v_0, v_{n+1}) : \forall v \in \pi, A(v) \models \mathcal{C}\}
\end{equation}
where $\Pi(v_0, v_{n+1})$ denotes all paths in $\mathcal{G}$ and $A(v) \models \mathcal{C}$ indicates that node $v$'s attributes satisfy constraint $\mathcal{C}$. This function handles constraint queries by filtering paths whose every node fulfills the specified feasibility requirements.

(3) \texttt{aggregateCharacterBehaviors}$(person) \rightarrow \{\mathcal{N}_1, \ldots, \mathcal{N}_k\}$: Returns all Logic Nodes linked to a specified person entity, enabling character-centric aggregation queries. This cross-procedure aggregation cannot be answered by individual embeddings alone.

The following theorem demonstrates that the symbolic query functions are deterministic: 

\begin{theorem}[Determinism Guarantee]
\label{thm:determinism}
All symbolic query functions $f \in \mathcal{F}$ are deterministic: for any fixed $\mathcal{G}$ and input $x$, repeated invocations of $f(\mathcal{G}, x)$ always return identical output $y$.
\end{theorem}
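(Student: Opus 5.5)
The plan is to treat each $f \in \mathcal{F}$ as a finite composition of pure operations on the immutable input $(\mathcal{G}, x)$. Determinism of the composite then follows from determinism of each piece. Concretely, we fix a canonical total order on $V$ (for instance by node id) and on the out-adjacency lists of $\mathcal{G}$, as well as on the Logic Nodes of $\mathcal{L}_{logic}$. We also stipulate that no query function reads a random seed, the wall clock, or any LLM or embedding call. These functions only read $\mathcal{G}$, $A$, \texttt{episodic\_links}, and the entity-anchor edges. We also note that $f$ has no side effects: it does not invoke the incremental maintenance of Section~\ref{sec:incremental}. Hence repeated calls see the same state.

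We then verify the three functions in turn. For \texttt{getProcedureWithEvidence}, the output is a lookup of the Logic Node whose goal $c$ matches the input under a fixed exact-match rule, followed by returning its stored $(\mathcal{G}, \texttt{episodic\_links})$. A lookup into a fixed finite map is a function, so the output is determined. For \texttt{queryStepSequence}, we argue by induction on the topological depth of the DAG. A depth-first enumeration from $v_0$ that visits successors in the fixed order produces the same sequence of paths $\Pi(v_0, v_{n+1})$ on every run. The enumeration terminates because $\mathcal{G}$ is acyclic and finite, so $|\Pi|$ is finite, which gives the stated $O(|\Pi| \cdot L)$ cost. Next, the filter $\forall v \in \pi,\ A(v) \models \mathcal{C}$ is a Boolean predicate evaluated on fixed finite attribute sets, so it has a unique truth value. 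The output $\Pi_{\mathcal{C}}$ is therefore uniquely determined as a set, and also as an ordered list under the canonical order. For \texttt{aggregateCharacterBehaviors}, the output is the preimage of a fixed person anchor under the fixed link relation, which is again a well-defined finite set.

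The main obstacle is that "deterministic" is only meaningful once the ambiguous elements are pinned down. One is the iteration order of hash-based containers; the other is the possibility that matching a goal or person string uses similarity search or an LLM. I would handle the first by proving set-equality of outputs, which is order-independent, and then noting that canonical sorting yields list-equality. I would handle the second by making explicit that $f$ receives an already-resolved node or anchor identifier. Under this reading, neural discovery in Stage I of Section~\ref{sec:multi_retrieval} lies outside $\mathcal{F}$, and the theorem concerns only $f(\mathcal{G}, x)$ for fixed $\mathcal{G}$ and $x$. With these conventions stated, each function is a mathematical function of its input, which is exactly the claim.
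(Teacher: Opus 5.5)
Your proof is essentially the paper's: a function-by-function check that \texttt{getProcedureWithEvidence} is a lookup of stored data, \texttt{queryStepSequence} is a depth-first enumeration plus a Boolean attribute filter on a fixed finite DAG, and \texttt{aggregateCharacterBehaviors} is a filter over fixed link relations, none of which involves sampling or LLM inference. The one difference is that the paper keeps goal-to-node resolution inside \texttt{getProcedureWithEvidence}, treating a fixed similarity computation over static Index Vectors as itself deterministic, so you need not recast its input as an already-resolved identifier and move neural discovery outside $\mathcal{F}$ (only stochastic LLM inference would break determinism, not similarity search); your canonical orderings and no-side-effect convention make explicit what the paper leaves implicit.
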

\textit{Proof.} See Appendix~\ref{proof:determinism}.

\subsection{Case Study}
\label{sec:case_study}

\begin{figure}[t]
\centering
\includegraphics[width=\columnwidth]{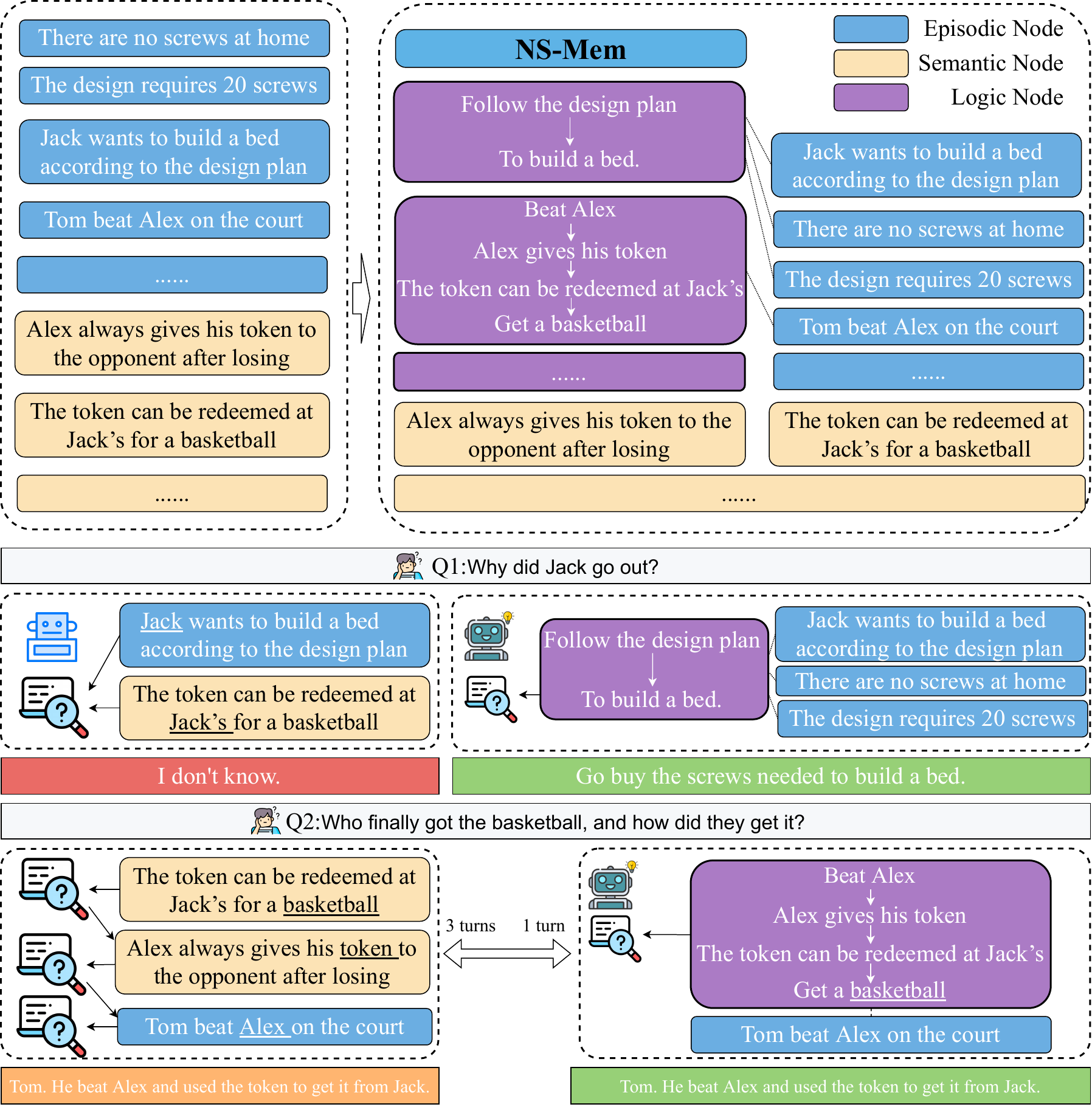}
\caption{Case study on vector-centric Memory and \ModelName{}.}
\label{fig:casestudy}
\end{figure}
Figure~\ref{fig:casestudy} demonstrates how NS-Mem outperforms vector-centric memory in reasoning tasks. In Q1, NS-Mem successfully infers Jack's intent by linking "building a bed" with "missing screws" via a Logic Node, whereas the baseline fails due to noise. In Q2, NS-Mem achieves the answer in a single turn by mapping the event "Tom beat Alex" to a pre-structured logic chain, while the baseline requires 3 turns of multi-hop retrieval. This highlights NS-Mem's superior capability in denoising and accelerating complex reasoning.

    \section{Experiments}
    \label{sec.experiment}
\label{sec.experiment}

\begin{table*}[t]
\centering
\caption{Performance comparison on M3-Bench-robot and M3-Bench-web. MD: Multi-Detail, MH: Multi-Hop, CM: Cross-Modal, HU: Human Understanding, GK: General Knowledge. Best results in each column are \underline{\textbf{underlined}}.}
\label{tab:main_results}
\small
\setlength{\tabcolsep}{4.2pt}
\begin{tabular*}{\textwidth}{@{\extracolsep{\fill}}lcccccccccccc@{}}
\toprule
& \multicolumn{6}{c}{\textbf{M3-Bench-robot}} & \multicolumn{6}{c}{\textbf{M3-Bench-web}} \\
\cmidrule(lr){2-7} \cmidrule(lr){8-13}
\textbf{Method} & \textbf{MD} & \textbf{MH} & \textbf{CM} & \textbf{HU} & \textbf{GK} & \textbf{All} & \textbf{MD} & \textbf{MH} & \textbf{CM} & \textbf{HU} & \textbf{GK} & \textbf{All} \\
\midrule
\multicolumn{13}{c}{\textit{Socratic Models}} \\
\cmidrule{1-13}
Qwen2.5-Omni-7b & 2.1 & 1.4 & 1.5 & 1.5 & 2.1 & 2.0 & 8.9 & 8.8 & 13.7 & 10.8 & 14.1 & 11.3 \\
Qwen2.5-VL-7b & 2.9 & 3.8 & 3.6 & 4.6 & 3.4 & 3.4 & 11.9 & 10.5 & 13.4 & 14.0 & 20.9 & 14.9 \\
Gemini-1.5-Pro & 6.5 & 7.5 & 8.0 & 9.7 & 7.6 & 8.0 & 18.0 & 17.9 & 23.8 & 23.1 & 28.7 & 23.2 \\
GPT-4o & 9.3 & 9.0 & 8.4 & 10.2 & 7.3 & 8.5 & 21.3 & 21.9 & 30.9 & 27.1 & 39.6 & 28.7 \\
\midrule
\multicolumn{13}{c}{\textit{Online Video Understanding}} \\
\cmidrule{1-13}
MovieChat & 13.3 & 9.8 & 12.2 & 15.7 & 7.0 & 11.2 & 12.2 & 6.6 & 12.5 & 17.4 & 11.1 & 12.6 \\
MA-LMM & 25.6 & 23.4 & 22.7 & 39.1 & 14.4 & 24.4 & 26.8 & 10.5 & 22.4 & 39.3 & 15.8 & 24.3 \\
Flash-VStream & 21.6 & 19.4 & 19.3 & 24.3 & 14.1 & 19.4 & 24.5 & 10.3 & 24.6 & 32.5 & 20.2 & 23.6 \\
\midrule
\multicolumn{13}{c}{\textit{Agent Methods}} \\
\cmidrule{1-13}
M3-Agent & 32.8 & 29.4 & 31.2 & 43.3 & 19.1 & 30.7 & 45.9 & 28.4 & 44.3 & 59.3 & 53.9 & 48.9 \\
\textbf{\ModelName{}} & \underline{\textbf{36.2}} & \underline{\textbf{31.5}} & \underline{\textbf{33.8}} & \underline{\textbf{45.7}} & \underline{\textbf{26.4}} & \underline{\textbf{34.7}} & \underline{\textbf{54.2}} & \underline{\textbf{34.6}} & \underline{\textbf{47.8}} & \underline{\textbf{60.1}} & \underline{\textbf{59.7}} & \underline{\textbf{53.6}} \\
\bottomrule
\end{tabular*}
\end{table*}

\begin{table}[t]
\centering
\vspace{-1em}
\caption{Accuracy by Query Type}
\label{tab:by_type}
\footnotesize
\begin{tabular}{lccc}
\toprule
\textbf{Method} & \textbf{Factual} & \textbf{Procedural} & \textbf{Constrained} \\
\midrule
M3-Agent & 52.5 & 23.8 & 25.0 \\
\ModelName{} & 54.3 & 35.7 & 37.5 \\
\midrule
$\Delta$ & +1.8 & +11.9 & +12.5 \\
\bottomrule
\end{tabular}
\end{table}

\begin{table}[t]
\centering
\caption{Efficiency Comparison on M3-Bench Datasets}
\label{tab:efficiency}
\footnotesize
\begin{tabular}{llccc}
\toprule
\textbf{Dataset} & \textbf{Metric} & \textbf{Baseline} & \textbf{\ModelName{}} & \textbf{$\Delta$} \\
\midrule
\multirow{2}{*}{Robot} & Avg. Rounds & 4.01 & 3.38 & -15.8\% \\
 & Avg. Time (sec) & 45.47 & 42.11 & -7.4\% \\
\midrule
\multirow{2}{*}{Web} & Avg. Rounds & 3.14 & 2.84 & -9.6\% \\
 & Avg. Time (sec) & 36.04 & 34.57 & -4.1\% \\
\bottomrule
\end{tabular}
\end{table}


\noindent\textbf{Datasets.} We evaluate our framework on M3-Bench~\cite{m3agent}, a comprehensive long-video question answering benchmark designed for memory-augmented agents. The benchmark consists of two primary subsets: \textbf{M3-Bench-robot}, which contains 100 real-world videos captured from a robot's perspective, and \textbf{M3-Bench-web}, which includes 920 web-sourced videos. The questions are categorized into five reasoning types: Multi-Detail (MD), Multi-Hop (MH), Cross-Modal (CM), Human Understanding (HU), and General Knowledge (GK). Due to computational constraints, We conduct evaluations on 50 videos (703 questions) for M3-Bench-robot and 550 videos (2,066 questions) for M3-Bench-web.

\noindent\textbf{Baseline Methods.} Following the previous work \cite{m3agent}, we compare \textbf{NS-Mem} against three categories of methods:
\begin{itemize}
\item\textbf{Socratic Models}. The methods directly query multimodal LLMs (Qwen2.5-Omni-7b, Qwen2.5-VL-7b, Gemini-1.5-Pro and GPT-4o) without explicit memory; 
\item \textbf{Online Video Understanding Methods}. The methods designed for streaming video processing (MovieChat\cite{song2024moviechat}, MA-LMM\cite{he2024malmm} and Flash-VStream\cite{zhang2024flash}); 
\item \textbf{Agent Method}. M3-Agent~\cite{m3agent} represents a state-of-the-art approach that utilizes episodic and semantic memory with vector-only retrieval.
\end{itemize}
\noindent\textbf{Metrics and Implementation.} Accuracy is evaluated using GPT-4o as the judge, following the standard M3-Bench protocol. For \ModelName{}, we set the hidden dimension to 512, the retrieval weight $\alpha$ to 0.3, and the verification threshold $\tau$ to 0.25. The incremental maintenance uses an EMA coefficient $\beta$ of 0.9. All experiments are conducted on a server with Intel(R) Xeon(R) Silver 4314 CPU, 512GB memory and NVIDIA RTX A5000 GPUs.

\begin{figure*}[t]
\centering
\begin{subfigure}{0.33\textwidth}
    \centering
    \includegraphics[width=\linewidth]{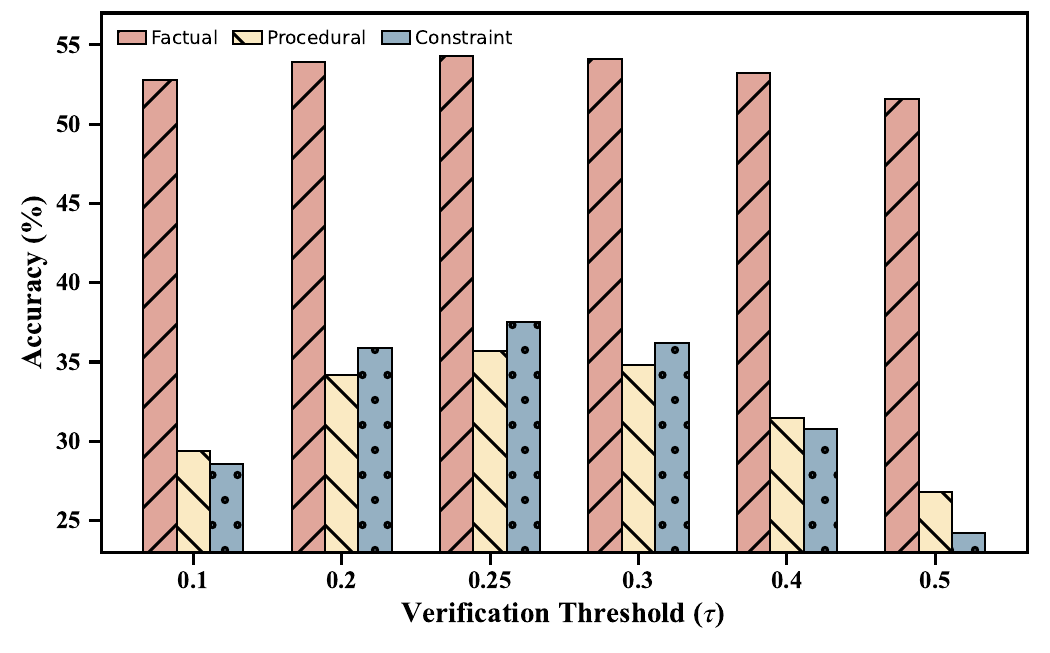}
    \caption{Impact of $\tau$}
    \label{fig:hyperparameter_tau}
\end{subfigure}
\begin{subfigure}{0.33\textwidth}
    \centering
    \includegraphics[width=\linewidth]{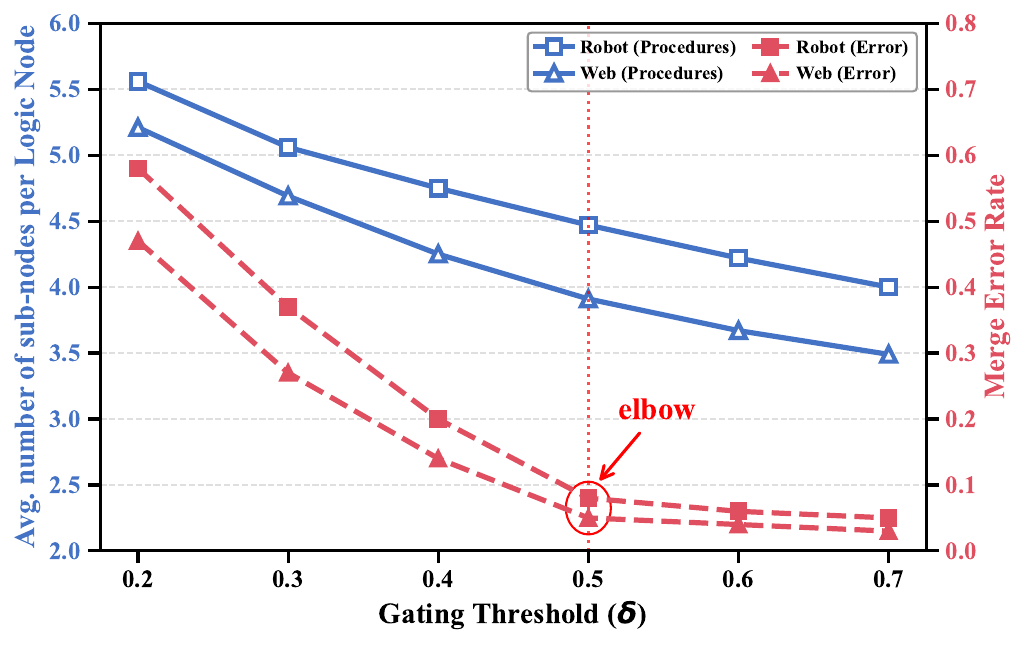}
    \caption{Impact of $\delta$}
    \label{fig:gating_threshold}
\end{subfigure}
\begin{subfigure}{0.33\textwidth}
    \centering
    \includegraphics[width=\linewidth]{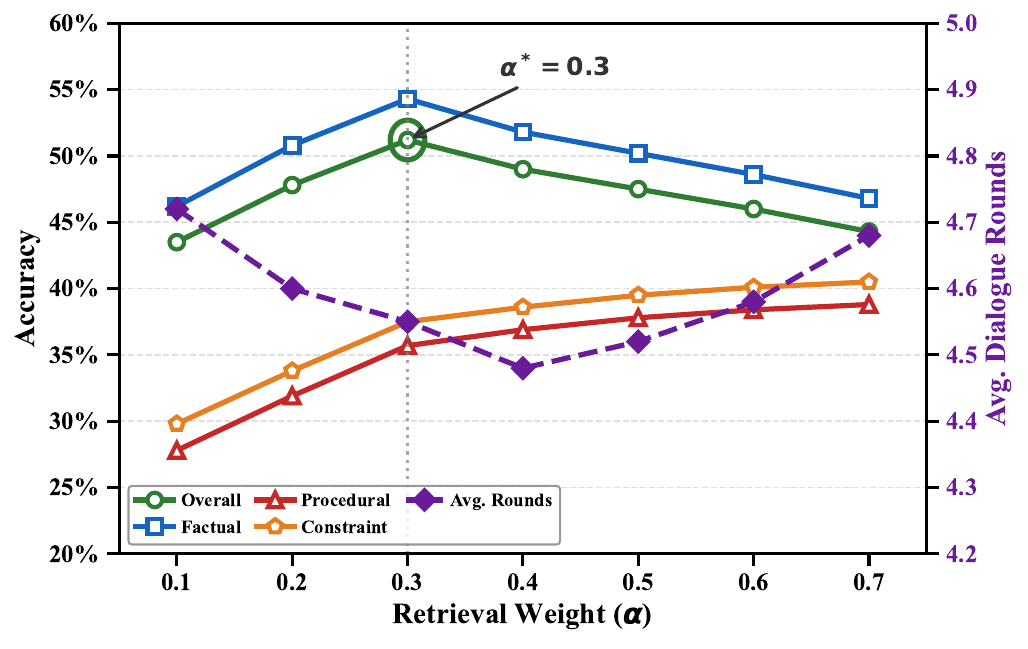}
    \caption{Impact of $\alpha$}
    \label{fig:alpha_sensitivity}
\end{subfigure}
\caption{Hyper-parameter analysis across different thresholds and weights. (a) Impact of $\tau$ on accuracy across query types. (b) Impact of of $\delta$ on knowledge consolidation and merge (c) Impact of $\alpha$ on accuracy and efficiency.}
\label{fig:hyperparameter_analysis}
\end{figure*}

\vspace{-3mm}
\subsection{Accuracy Comparison}
\noindent\textbf{Exp-1: Overall performance comparison.} In this experiment, we evaluate the overall accuracy of \ModelName{} compared to all baselines. The results are summarized in Table~\ref{tab:main_results}. As shown
in the table, \ModelName{} consistently outperforms all baseline methods across both Robot and Web datasets. Specifically, \ModelName{} achieves 53.6\% accuracy on M3-Bench-web and 34.7\% on M3-Bench-robot, representing absolute improvements of +4.7 and +4.0 points over M3-Agent. In contrast, Socratic Models and streaming methods show significantly lower performance. For instance, GPT-4o achieves only 8.5\% on Robot, which is 4.1$\times$ lower than \ModelName{}. This is because our neuro-symbolic architecture provides a structured substrate for reasoning about procedural sequences, which baselines fail to capture.

\noindent\textbf{Exp-2: Accuracy over different reasoning types.} We further analyze performance across the five reasoning types in Table~\ref{tab:main_results}. Notably, \ModelName{} demonstrates substantial gains in MH and GK. For MH, we observe relative gains of 21.8\% on Web, which is because the Procedural DAG enables explicit multi-step path enumeration. For GK, \ModelName{} shows a 38.2\% relative gain on Robot, benefiting from NS-Nodes that consolidate domain-specific procedural patterns from episodic observations. This validates that the logic layer effectively abstracts reusable knowledge from concrete experiences.

\noindent\textbf{Exp-3: Performance under different query types.} To understand where neuro-symbolic memory provides the most value, we break down the results by query type in Table~\ref{tab:by_type}. We observe that procedural and constrained queries benefit most from the neuro-symbolic layer, with relative improvements of 50.0\% for both. This is because the Procedural DAG explicitly encodes step-by-step logic, enabling deterministic constraint satisfaction via symbolic functions. In contrast, factual queries show 1.8\% improvement, as they primarily require direct recall rather than structured reasoning.

\subsection{Efficiency Evaluation}

\noindent\textbf{Exp-4: Efficiency.} In this experiment, we evaluate the efficiency of \ModelName{} in terms of retrieval rounds and time. The results are summarized in Table~\ref{tab:efficiency}. The results show that \ModelName{} significantly reduces the number of query rounds from 4.01 to 3.38 on Robot. This is because symbolic functions like \texttt{queryStepSequence()} can return complete procedural sequences in a single call, eliminating the iterative cycles required by pure neural memory. This reduction leads to concrete time savings of 7.4\% on Robot and 4.1\% on Web, even with the minimal overhead of symbolic execution.

\subsection{Ablation Study}

\noindent\textbf{Exp-5: Ablation study.} We evaluate the individual contributions of key components in Table~\ref{tab:ablation}. We can see that symbolic reasoning is the most critical component, providing 2.5$\times$ larger gains on Web compared to retrieval enhancement alone. Furthermore, we observe a synergistic interaction. For example, on dataset Web, the combined gain of the full model (+4.7) exceeds the sum of individual gains (+1.1 + +2.8 = 3.9). This is because the neural component (Index Vectors) improves retrieval precision, providing a more relevant substrate for symbolic reasoning to operate on.

\begin{table}[t]
\centering
\caption{Ablation Study on M3-Bench Datasets}
\label{tab:ablation}
\footnotesize
\begin{tabular}{llcc}
\toprule
\textbf{Configuration} & \textbf{Description} & \textbf{Web} & \textbf{Robot} \\
\midrule
Baseline & M3-Agent  & 48.9 & 30.7 \\
\midrule
w/o Symbolic & +Neuro +DAG  & 50.0 & 31.7 \\
w/o Neuro & +Symbolic +DAG & 51.7 & 33.1 \\
\midrule
\textbf{Full \ModelName{}} & \textbf{+Neuro +Symbolic +DAG} & \textbf{53.6} & \textbf{34.7} \\
\bottomrule
\end{tabular}
\end{table}

\subsection{Maintenance Evaluation}

\noindent\textbf{Exp-6: Incremental update performance.} In this experiment, we evaluate the capacity of \ModelName{} to maintain NS-Nodes as new observations arrive incrementally. We compare the performance of static graphs with dynamic graphs. The results are summarized in Table~\ref{tab:incremental}. The results show that dynamic graphs consistently outperform static graphs in both accuracy and efficiency. Specifically, for accuracy, the dynamic approach achieves 34.7\% on Robot and 53.6\% on Web, representing absolute improvements of +0.9\% and +0.6\% respectively. Regarding efficiency, the dynamic graph reduces the average number of rounds from 3.52 to 3.38 on Robot and from 2.92 to 2.84 on Web. This is because the EMA-based refinement effectively incorporates new procedural variations while preserving historical knowledge, preventing semantic staleness without requiring costly full reconstruction.

\begin{table}[t]
\centering
\caption{Incremental Update Evaluation.}
\label{tab:incremental}
\footnotesize
\begin{tabular}{llccc}
\toprule
\textbf{Metric} & \textbf{Dataset} & \textbf{Static} & \textbf{Dynamic} & \textbf{$\Delta$} \\
\midrule
\multirow{2}{*}{Accuracy (\%)} & Robot & 33.8 & 34.7 & +0.9 \\
 & Web & 53.0 & 53.6 & +0.6 \\
\midrule
\multirow{2}{*}{Avg. Rounds} & Robot & 3.52 & 3.38 & -0.14 \\
 & Web & 2.92 & 2.84 & -0.08 \\
\bottomrule
\end{tabular}
\end{table}

\subsection{Hyper-parameter Analysis}

\noindent\textbf{Exp-7: Verification threshold ($\tau$).} We analyze the impact of $\tau$ on accuracy in Figure~\ref{fig:hyperparameter_tau}. We observe that procedural and constraint queries are highly sensitive to $\tau$, peaking at $\tau=0.25$. This is because setting $\tau$ too low admits spurious patterns that break symbolic logic, while setting it too high overly filters valid procedural knowledge.

\noindent\textbf{Exp-8: Gating threshold ($\delta$).} As shown in Figure~\ref{fig:gating_threshold}, the gating threshold $\delta$ controls the balance between knowledge consolidation and noise prevention. The results show that merge error rates drop sharply until $\delta=0.5$, where they reach an elbow. This validates our choice of $\delta=0.5$ as the trade-off for incremental maintenance.

\noindent\textbf{Exp-9: Retrieval weight ($\alpha$).} We evaluate the impact of $\alpha$ in Figure~\ref{fig:alpha_sensitivity}. We observe that overall accuracy peaks at $\alpha=0.3$. This is because a moderate weight balances high-level goal intent with specific experiential grounding, ensuring that Memory Prototypes remain both discoverable and contextually accurate.

    \section{Conclusion}
    \label{sec.conclusion}
     In this paper, we presented \ModelName{}, a long-term neuro-symbolic memory framework to bridge the gap between intuitive neural retrieval and deterministic symbolic reasoning for multimodal agents. By integrating a hierarchical three-layer architecture with explicit logic rules and procedural DAGs, \ModelName{} enables agents to handle complex decision-making tasks that require constraint satisfaction and dependency reasoning. Our proposed \GenerationName{} mechanism further ensures that this memory can be automatically constructed and incrementally maintained from continuous multimodal observations. Extensive experiments on real-world benchmarks demonstrate that \ModelName{} significantly outperforms the state-of-the-art approach, particularly in constrained reasoning scenarios where symbolic structures provide rigorous logical grounding. 
    
    

    \newpage
    \normalem
    \bibliographystyle{ACM-Reference-Format}
    \bibliography{ref_new}  

@article{zhang2024flash,
  title={Flash-vstream: Memory-based real-time understanding for long video streams},
  author={Zhang, Haoji and Wang, Yiqin and Tang, Yansong and Liu, Yong and Feng, Jiashi and Dai, Jifeng and Jin, Xiaojie},
  journal={arXiv preprint arXiv:2406.08085},
  year={2024}
}

@book{russell2010artificial,
  title     = {Artificial Intelligence: A Modern Approach},
  author    = {Russell, Stuart and Norvig, Peter},
  year      = {2010},
  publisher = {Prentice Hall},
  edition   = {3rd}
}

@article{wooldridge1995intelligent,
  title   = {Intelligent agents: Theory and practice},
  author  = {Wooldridge, Michael and Jennings, Nicholas R},
  journal = {The Knowledge Engineering Review},
  volume  = {10},
  number  = {2},
  pages   = {115--152},
  year    = {1995}
}

@inproceedings{packer2023memgpt,
  title     = {MemGPT: Towards LLMs as Operating Systems},
  author    = {Packer, Charles and others},
  booktitle = {NeurIPS},
  year      = {2023}
}

@inproceedings{wang2023voyager,
  title     = {Voyager: An Open-Ended Embodied Agent with Large Language Models},
  author    = {Wang, Guanzhi and others},
  booktitle = {NeurIPS},
  year      = {2023}
}

@article{tulving1972episodic,
  title   = {Episodic and semantic memory},
  author  = {Tulving, Endel},
  journal = {Organization of memory},
  volume  = {1},
  pages   = {381--403},
  year    = {1972}
}

@article{DBLP:journals/air/GarcezL23,
  author       = {Artur d'Avila Garcez and
                  Lu{\'{\i}}s C. Lamb},
  title        = {Neurosymbolic {AI:} the 3rd wave},
  journal      = {Artif. Intell. Rev.},
  volume       = {56},
  number       = {11},
  pages        = {12387--12406},
  year         = {2023},
  url          = {https://doi.org/10.1007/s10462-023-10448-w},
  doi          = {10.1007/S10462-023-10448-W},
  bibsource    = {dblp computer science bibliography, https://dblp.org}
}

@book{DBLP:series/faia/342,
  editor       = {Pascal Hitzler and
                  Md. Kamruzzaman Sarker},
  title        = {Neuro-Symbolic Artificial Intelligence: The State of the Art},
  series       = {Frontiers in Artificial Intelligence and Applications},
  volume       = {342},
  publisher    = {{IOS} Press},
  year         = {2021},
  url          = {https://doi.org/10.3233/FAIA342},
  doi          = {10.3233/FAIA342},
  isbn         = {978-1-64368-244-0},
  bibsource    = {dblp computer science bibliography, https://dblp.org}
}

@article{DBLP:journals/corr/abs-2206-14576,
  author       = {Marcel Binz and
                  Eric Schulz},
  title        = {Using cognitive psychology to understand {GPT-3}},
  journal      = {CoRR},
  volume       = {abs/2206.14576},
  year         = {2022},
  url          = {https://doi.org/10.48550/arXiv.2206.14576},
  doi          = {10.48550/ARXIV.2206.14576},
  eprinttype    = {arXiv},
  eprint       = {2206.14576},
  bibsource    = {dblp computer science bibliography, https://dblp.org}
}

@inproceedings{DBLP:conf/iclr/Saparov023,
  author       = {Abulhair Saparov and
                  He He},
  title        = {Language Models Are Greedy Reasoners: {A} Systematic Formal Analysis
                  of Chain-of-Thought},
  booktitle    = {The Eleventh International Conference on Learning Representations,
                  {ICLR} 2023, Kigali, Rwanda, May 1-5, 2023},
  publisher    = {OpenReview.net},
  year         = {2023},
  url          = {https://openreview.net/forum?id=qFVVBzXxR2V},
  bibsource    = {dblp computer science bibliography, https://dblp.org}
}

@article{DBLP:journals/corr/abs-2305-15771,
  author       = {Karthik Valmeekam and
                  Matthew Marquez and
                  Sarath Sreedharan and
                  Subbarao Kambhampati},
  title        = {On the Planning Abilities of Large Language Models - {A} Critical
                  Investigation},
  journal      = {CoRR},
  volume       = {abs/2305.15771},
  year         = {2023},
  url          = {https://doi.org/10.48550/arXiv.2305.15771},
  doi          = {10.48550/ARXIV.2305.15771},
  eprinttype    = {arXiv},
  eprint       = {2305.15771},
  bibsource    = {dblp computer science bibliography, https://dblp.org}
}

@article{DBLP:journals/corr/abs-2212-09196,
  author       = {Taylor W. Webb and
                  Keith J. Holyoak and
                  Hongjing Lu},
  title        = {Emergent Analogical Reasoning in Large Language Models},
  journal      = {CoRR},
  volume       = {abs/2212.09196},
  year         = {2022},
  url          = {https://doi.org/10.48550/arXiv.2212.09196},
  doi          = {10.48550/ARXIV.2212.09196},
  eprinttype    = {arXiv},
  eprint       = {2212.09196},
  bibsource    = {dblp computer science bibliography, https://dblp.org}
}

@article{DBLP:journals/fcsc/WangMFZYZCTCLZWW24,
  author       = {Lei Wang and
                  Chen Ma and
                  Xueyang Feng and
                  Zeyu Zhang and
                  Hao Yang and
                  Jingsen Zhang and
                  Zhiyuan Chen and
                  Jiakai Tang and
                  Xu Chen and
                  Yankai Lin and
                  Wayne Xin Zhao and
                  Zhewei Wei and
                  Jirong Wen},
  title        = {A survey on large language model based autonomous agents},
  journal      = {Frontiers Comput. Sci.},
  volume       = {18},
  number       = {6},
  pages        = {186345},
  year         = {2024},
  url          = {https://doi.org/10.1007/s11704-024-40231-1},
  doi          = {10.1007/S11704-024-40231-1},
  bibsource    = {dblp computer science bibliography, https://dblp.org}
}

@inproceedings{lewis2020retrieval,
  title     = {Retrieval-Augmented Generation for Knowledge-Intensive NLP Tasks},
  author    = {Lewis, Patrick and others},
  booktitle = {NeurIPS},
  year      = {2020}
}

@article{garcez2020neurosymbolic,
  title   = {Neurosymbolic AI: The 3rd Wave},
  author  = {Garcez, Artur d'Avila and Lamb, Luis C},
  journal = {Artificial Intelligence Review},
  year    = {2023}
}

@inproceedings{rocktaschel2017end,
  title     = {End-to-End Differentiable Proving},
  author    = {Rocktäschel, Tim and Riedel, Sebastian},
  booktitle = {NeurIPS},
  year      = {2017}
}

@inproceedings{evans2018learning,
  title     = {Learning Explanatory Rules from Noisy Data},
  author    = {Evans, Richard and Grefenstette, Edward},
  booktitle = {JAIR},
  year      = {2018}
}

@inproceedings{suris2023vipergpt,
  title     = {ViperGPT: Visual Inference via Python Execution for Reasoning},
  author    = {Surís, Dídac and Menon, Sachit and Vondrick, Carl},
  booktitle = {ICCV},
  year      = {2023}
}

@inproceedings{song2024moviechat,
  title     = {MovieChat: From Dense Token to Sparse Memory for Long Video Understanding},
  author    = {Song, Enxin and others},
  booktitle = {CVPR},
  year      = {2024}
}

@inproceedings{he2024malmm,
  title     = {MA-LMM: Memory-Augmented Large Multimodal Model for Long-Term Video Understanding},
  author    = {He, Bo and others},
  booktitle = {CVPR},
  year      = {2024}
}

@article{m3agent,
  title={Seeing, listening, remembering, and reasoning: A multimodal agent with long-term memory},
  author={Long, Lin and He, Yichen and Ye, Wentao and Pan, Yiyuan and Lin, Yuan and Li, Hang and Zhao, Junbo and Li, Wei},
  journal={arXiv preprint arXiv:2508.09736},
  year={2025}
}

@inproceedings{wang2024videoagent,
  title     = {VideoAgent: Long-form Video Understanding with Large Language Model as Agent},
  author    = {Wang, Xiaohan and others},
  booktitle = {ECCV},
  year      = {2024}
}

@article{garcez2019neural,
  title   = {Neural-symbolic computing: An effective methodology for principled integration of machine learning and reasoning},
  author  = {Garcez, Artur d'Avila and Lamb, Luis C},
  journal = {Journal of Applied Logics},
  volume  = {6},
  number  = {4},
  pages   = {611--632},
  year    = {2019}
}

@inproceedings{yang2017differentiable,
  title     = {Differentiable Learning of Logical Rules for Knowledge Base Reasoning},
  author    = {Yang, Fan and others},
  booktitle = {NeurIPS},
  year      = {2017}
}

@inproceedings{gao2023pal,
  title     = {PAL: Program-aided Language Models},
  author    = {Gao, Luyu and others},
  booktitle = {ICML},
  year      = {2023}
}

@inproceedings{yi2018neural,
  title     = {Neural-Symbolic VQA: Disentangling Reasoning from Vision and Language Understanding},
  author    = {Yi, Kexin and others},
  booktitle = {NeurIPS},
  year      = {2018}
}

@inproceedings{tang2019coin,
  title     = {COIN: A Large-scale Dataset for Comprehensive Instructional Video Analysis},
  author    = {Tang, Yansong and others},
  booktitle = {CVPR},
  year      = {2019}
}

@inproceedings{pei2001prefixspan,
  title     = {PrefixSpan: Mining Sequential Patterns Efficiently by Prefix-Projected Pattern Growth},
  author    = {Pei, Jian and others},
  booktitle = {ICDE},
  year      = {2001}
}

@inproceedings{shridhar2020alfred,
  title     = {ALFRED: A Benchmark for Interpreting Grounded Instructions for Everyday Tasks},
  author    = {Shridhar, Mohit and others},
  booktitle = {CVPR},
  year      = {2020}
}

@article{long2025seeing,
  title={Seeing, listening, remembering, and reasoning: A multimodal agent with long-term memory},
  author={Long, Lin and He, Yichen and Ye, Wentao and Pan, Yiyuan and Lin, Yuan and Li, Hang and Zhao, Junbo and Li, Wei},
  journal={arXiv preprint arXiv:2508.09736},
  year={2025}
}

@article{peiyuan2024agile,
  title={Agile: A novel reinforcement learning framework of llm agents},
  author={Peiyuan, Feng and He, Yichen and Huang, Guanhua and Lin, Yuan and Zhang, Hanchong and Zhang, Yuchen and Li, Hang},
  journal={Advances in Neural Information Processing Systems},
  volume={37},
  pages={5244--5284},
  year={2024}
}

@article{lin2023mm,
  title={Mm-vid: Advancing video understanding with gpt-4v (ision)},
  author={Lin, Kevin and Ahmed, Faisal and Li, Linjie and Lin, Chung-Ching and Azarnasab, Ehsan and Yang, Zhengyuan and Wang, Jianfeng and Liang, Lin and Liu, Zicheng and Lu, Yumao and others},
  journal={arXiv preprint arXiv:2310.19773},
  year={2023}
}

@article{mei2024aios,
  title={Aios: Llm agent operating system},
  author={Mei, Kai and Zhu, Xi and Xu, Wujiang and Hua, Wenyue and Jin, Mingyu and Li, Zelong and Xu, Shuyuan and Ye, Ruosong and Ge, Yingqiang and Zhang, Yongfeng},
  journal={arXiv preprint arXiv:2403.16971},
  year={2024}
}

@article{wang2023enhancing,
  title={Enhancing large language model with self-controlled memory framework},
  author={Wang, Bing and Liang, Xinnian and Yang, Jian and Huang, Hui and Wu, Shuangzhi and Wu, Peihao and Lu, Lu and Ma, Zejun and Li, Zhoujun},
  journal={arXiv preprint arXiv:2304.13343},
  year={2023}
}

@inproceedings{zhong2024memorybank,
  title={Memorybank: Enhancing large language models with long-term memory},
  author={Zhong, Wanjun and Guo, Lianghong and Gao, Qiqi and Ye, He and Wang, Yanlin},
  booktitle={Proceedings of the AAAI Conference on Artificial Intelligence},
  volume={38},
  number={17},
  pages={19724--19731},
  year={2024}
}

@inproceedings{hu2025hiagent,
  title={Hiagent: Hierarchical working memory management for solving long-horizon agent tasks with large language model},
  author={Hu, Mengkang and Chen, Tianxing and Chen, Qiguang and Mu, Yao and Shao, Wenqi and Luo, Ping},
  booktitle={Proceedings of the 63rd Annual Meeting of the Association for Computational Linguistics (Volume 1: Long Papers)},
  pages={32779--32798},
  year={2025}
}

@article{liu2024llm,
  title={From llm to conversational agent: A memory enhanced architecture with fine-tuning of large language models},
  author={Liu, Na and Chen, Liangyu and Tian, Xiaoyu and Zou, Wei and Chen, Kaijiang and Cui, Ming},
  journal={arXiv preprint arXiv:2401.02777},
  year={2024}
}

@article{liu2024agentlite,
  title={Agentlite: A lightweight library for building and advancing task-oriented llm agent system},
  author={Liu, Zhiwei and Yao, Weiran and Zhang, Jianguo and Yang, Liangwei and Liu, Zuxin and Tan, Juntao and Choubey, Prafulla K and Lan, Tian and Wu, Jason and Wang, Huan and others},
  journal={arXiv preprint arXiv:2402.15538},
  year={2024}
}

@inproceedings{sarch2023open,
  title={Open-ended instructable embodied agents with memory-augmented large language models},
  author={Sarch, Gabriel and Wu, Yue and Tarr, Michael and Fragkiadaki, Katerina},
  booktitle={Findings of the Association for Computational Linguistics: EMNLP 2023},
  pages={3468--3500},
  year={2023}
}

@article{shang2024agentsquare,
  title={Agentsquare: Automatic llm agent search in modular design space},
  author={Shang, Yu and Li, Yu and Zhao, Keyu and Ma, Likai and Liu, Jiahe and Xu, Fengli and Li, Yong},
  journal={arXiv preprint arXiv:2410.06153},
  year={2024}
}

@inproceedings{diko2025rewind,
  title={ReWind: Understanding Long Videos with Instructed Learnable Memory},
  author={Diko, Anxhelo and Wang, Tinghuai and Swaileh, Wassim and Sun, Shiyan and Patras, Ioannis},
  booktitle={Proceedings of the Computer Vision and Pattern Recognition Conference},
  pages={13734--13743},
  year={2025}
}

@article{liu2024memlong,
  title={Memlong: Memory-augmented retrieval for long text modeling},
  author={Liu, Weijie and Tang, Zecheng and Li, Juntao and Chen, Kehai and Zhang, Min},
  journal={arXiv preprint arXiv:2408.16967},
  year={2024}
}

@article{bao2024boosting,
  title={Boosting micro-expression recognition via self-expression reconstruction and memory contrastive learning},
  author={Bao, Yongtang and Wu, Chenxi and Zhang, Peng and Shan, Caifeng and Qi, Yue and Ben, Xianye},
  journal={IEEE Transactions on Affective Computing},
  volume={15},
  number={4},
  pages={2083--2096},
  year={2024},
  publisher={IEEE}
}

@article{zhang2024internlm,
  title={Internlm-xcomposer2. 5-omnilive: A comprehensive multimodal system for long-term streaming video and audio interactions},
  author={Zhang, Pan and Dong, Xiaoyi and Cao, Yuhang and Zang, Yuhang and Qian, Rui and Wei, Xilin and Chen, Lin and Li, Yifei and Niu, Junbo and Ding, Shuangrui and others},
  journal={arXiv preprint arXiv:2412.09596},
  year={2024}
}

@inproceedings{he2024ma,
  title={Ma-lmm: Memory-augmented large multimodal model for long-term video understanding},
  author={He, Bo and Li, Hengduo and Jang, Young Kyun and Jia, Menglin and Cao, Xuefei and Shah, Ashish and Shrivastava, Abhinav and Lim, Ser-Nam},
  booktitle={Proceedings of the IEEE/CVF Conference on Computer Vision and Pattern Recognition},
  pages={13504--13514},
  year={2024}
}

@inproceedings{zhang2024mm,
  title={Mm-narrator: Narrating long-form videos with multimodal in-context learning},
  author={Zhang, Chaoyi and Lin, Kevin and Yang, Zhengyuan and Wang, Jianfeng and Li, Linjie and Lin, Chung-Ching and Liu, Zicheng and Wang, Lijuan},
  booktitle={Proceedings of the IEEE/CVF Conference on Computer Vision and Pattern Recognition},
  pages={13647--13657},
  year={2024}
}

@inproceedings{deng2019arcface,
  title={ArcFace: Additive Angular Margin Loss for Deep Face Recognition},
  author={Deng, Jiankang and Guo, Jia and Xue, Niannan and Zafeiriou, Stefanos},
  booktitle={Proceedings of the IEEE/CVF Conference on Computer Vision and Pattern Recognition},
  pages={4690--4699},
  year={2019}
}

@inproceedings{chen2023eres2net,
  title={An Enhanced Res2Net with Local and Global Feature Fusion for Speaker Verification},
  author={Chen, Yafeng and Zheng, Siqi and Wang, Hui and Cheng, Luyao and Chen, Qian and Qi, Jiajun},
  booktitle={Interspeech},
  pages={3032--3036},
  year={2023}
}

@book{kahneman2011thinking,
  title={Thinking, fast and slow},
  author={Kahneman, Daniel},
  year={2011},
  publisher={macmillan}
}

    \vspace{3pt}
    \newpage
    \appendix
    \section{Appendix}

\subsection{Notation Table}
\label{tab:symbol}
\begin{table}[t]
\centering
\caption{Key notations used throughout this paper.}
\begin{tabular}{|p{2.5cm}|p{5.7cm}|}
\hline
\textbf{Notation} & \textbf{Description} \\ \hline\hline
$\mathcal{M}$ & Memory system $\mathcal{M} = (\mathcal{L}_{epi}, \mathcal{L}_{sem}, \mathcal{L}_{logic}, \mathcal{E})$ \\ \hline
$\mathcal{L}_{epi}, \mathcal{L}_{sem}, \mathcal{L}_{logic}$ & Episodic, Semantic, and Logic layers \\ \hline
$\mathcal{O}=\{o_1, o_2, \ldots\}$ & Multimodal observation stream \\ \hline
$o_t = (o_t^v, o_t^a, o_t^s)$ & Observation at time $t$ (visual, audio, text) \\ \hline
$\mathcal{Q}, \mathcal{A}$ & Query space and answer space \\ \hline
$q \in \mathcal{Q}$ & A user query in natural language \\ \hline
$e = (t, \mathbf{d}, \mathbf{v}_e)$ & Episodic node: timestamp, content, episodic embedding \\ \hline
$s = (\text{type}, \text{attrs}, \mathbf{v}_s)$ & Semantic node: entity type, attributes, semantic embedding \\ \hline
$\mathcal{N} = (id, c, \mathbf{I}, \mathcal{G}, \mathcal{F})$ & Logic Node: id, goal description, index vectors, DAG, functions \\ \hline
$c$ & Natural language goal description 
\\ \hline
$\mathbf{I} = \{\mathbf{i}_{goal}, \mathbf{i}_{step}\}$ & Index Vectors: goal-level and step-level embeddings \\ \hline
$\mathbf{i}_{goal}, \mathbf{i}_{step}$ & Goal index $\phi(c)$ and step index $\frac{1}{|S|}\sum_{s}\phi(s)$ \\ \hline
$\mathcal{G} = (V, E, A)$ & Procedural DAG: nodes, edges, attribute function \\ \hline
$\mathcal{F}$ & Set of deterministic symbolic query functions \\ \hline
$\phi: \text{Text} \rightarrow \mathbb{R}^d$ & Text embedding function 
\\ \hline
$d$ & Embedding dimension \\ \hline
\end{tabular}
\end{table}

\subsection{Complexity Analysis}
\label{sec:complexity}

\ModelName{} operates continuously through two decoupled processes: incremental maintenance and query-driven reasoning. 
In terms of the incremental maintenance phase, for each incoming observation at time step $t$, the system performs pattern mining and prototype updates sequentially. 
The \GenerationName{} mechanism processes a sliding window of size $w$ to update the episodic buffer, with a time complexity of $O(w \times d)$ for embedding computation, where $d$ is the vector dimension.
Regarding the structural refinement, updating the transition statistics and edges in the Procedural DAG $\mathcal{G}=(V, E)$ takes $O(1)$ via hash-based lookups. 
The prototype update via EMA requires element-wise operations with a complexity of $O(d)$. 
Crucially, maintaining the vector index for $\mathcal{L}_{logic}$ involves incremental insertions. Using graph-based indexing structures, this requires $O(d \log |\mathcal{N}|)$, where $|\mathcal{N}|$ is the number of logic nodes.
Therefore, the total maintenance complexity per time step is dominated by the index update $O(d \log |\mathcal{N}|)$, which is highly efficient compared to batch retraining.

Regarding the retrieval and reasoning phase, the system first identifies relevant Logic Nodes $\mathcal{N}$ using the index-based retrieval.
For a specific query $q$, the search complexity is $O(d \log |\mathcal{N}|)$.
Once a relevant node is identified, symbolic reasoning is executed on the associated procedural DAG $\mathcal{G}$. 
The traversal of finding a path or checking constraints takes $O(|V| + |E|)$.

Thus, suppose that \ModelName{} runs for $T$ time steps and handles $Q$ queries, the overall time complexity is $O(T \times d \log |\mathcal{N}| + Q \times (d \log |\mathcal{N}| + |V| + |E|))$. 

\subsection{Proof of Theorem~\ref{thm:consistency} (Posterior Consistency)}
\label{proof:consistency}

\begin{proof}
We prove consistency for both parameter types.

\textbf{Node success rates (Beta-Binomial)}:
The posterior mean is:
\begin{equation}
\hat{R}(v) = \frac{\alpha + s}{\alpha + \beta + n}
\end{equation}
where $s$ is successes out of $n$ trials. As $n \to \infty$:
\begin{equation}
\hat{R}(v) = \frac{\alpha + s}{\alpha + \beta + n} = \frac{\alpha/n + s/n}{\alpha/n + \beta/n + 1} \to \frac{s}{n} \to R^*(v)
\end{equation}
by the strong law of large numbers ($s/n \xrightarrow{a.s.} R^*(v)$).

\textbf{Edge probabilities (Dirichlet-Multinomial)}:
The posterior mean for edge $(u, v_j)$ is:
\begin{equation}
\hat{p}_{u,v_j} = \frac{\gamma_j + c_j}{\sum_i (\gamma_i + c_i)}
\end{equation}
where $c_j$ is the count of transitions to $v_j$. As total observations $n = \sum c_i \to \infty$:
\begin{equation}
\hat{p}_{u,v_j} \to \frac{c_j}{n} \xrightarrow{a.s.} p^*_{u,v_j}
\end{equation}
again by the strong law of large numbers.

Both results follow from Doob's posterior consistency theorem for exponential families with compact parameter spaces.
\end{proof}

\subsection{Proof of Theorem~\ref{thm:fusion} (Fusion Consistency)}
\label{proof:fusion}

\begin{proof}
We show that the fusion algorithm preserves correctness under the assumption that input DAGs are valid observations of the same true procedure.

\textbf{Node alignment correctness}: Using semantic embeddings with cosine similarity and threshold $\tau = 0.8$, nodes representing the same action (with potentially different surface forms) are correctly matched with high probability. The Hungarian algorithm guarantees optimal bipartite matching.

\textbf{Parameter fusion correctness}: The Bayesian fusion rule:
\begin{align}
\alpha_{\text{fused}} &= \alpha_1 + \alpha_2 - 1 \\
\beta_{\text{fused}} &= \beta_1 + \beta_2 - 1
\end{align}
is equivalent to pooling observations: if DAG 1 observed $(s_1, n_1)$ successes/trials and DAG 2 observed $(s_2, n_2)$, the fused posterior is:
\begin{equation}
\text{Beta}(\alpha_0 + s_1 + s_2, \beta_0 + (n_1 - s_1) + (n_2 - s_2))
\end{equation}
which is the correct posterior for combined observations.

\textbf{Structure preservation}: New edges (alternative paths) discovered in one video but not another are added to the fused DAG with appropriate Laplace-smoothed initial probabilities. This ensures no valid alternatives are lost.

By Theorem~\ref{thm:consistency}, as more videos are fused, parameters converge to true values, and the structure asymptotically captures all valid paths.
\end{proof}








\subsection{Proof of Theorem~\ref{thm:determinism} (Determinism)}
\label{proof:determinism}


\begin{proof}
We prove that each of the three symbolic query functions produces deterministic outputs.

\textbf{Function 1} (\texttt{getProcedureWithEvidence}): Given a goal description, this function retrieves the corresponding Logic Node and returns its Procedural DAG $\mathcal{G}$ together with the associated \texttt{episodic\_links}. Since the Logic Node is identified through a fixed similarity computation on static Index Vectors, and both $\mathcal{G}$ and \texttt{episodic\_links} are stored data structures, the output is a deterministic lookup with no stochastic component.

\textbf{Function 2} (\texttt{queryStepSequence}): Given a goal and constraint set $\mathcal{C}$, this function enumerates all paths $\Pi(v_0, v_{n+1})$ from \texttt{START} to \texttt{GOAL} in the fixed DAG $\mathcal{G}$ via depth-first traversal, then filters by checking $A(v) \models \mathcal{C}$ for every node $v$ along each path. Graph traversal on a static structure is deterministic, and attribute-constraint satisfaction is a Boolean predicate evaluated through set/arithmetic operations. Hence the filtered path set $\Pi_{\mathcal{C}}$ is uniquely determined by $(\mathcal{G}, \mathcal{C})$.

\textbf{Function 3} (\texttt{aggregateCharacterBehaviors}): Given a person entity identifier, this function scans the Logic Layer $\mathcal{L}_{logic}$ and collects all Logic Nodes whose \texttt{episodic\_links} reference episodic nodes associated with the specified entity anchor. Both the entity-anchor association and the \texttt{episodic\_links} are fixed stored references, making the aggregation a deterministic filtering operation over a static set.

Since none of the three functions involves random sampling, stochastic models, or LLM inference during computation, all outputs are reproducible given identical inputs.
\end{proof}





    \label{sec.appendix}
    \balance
\end{document}